\documentclass[sn-mathphys-num]{sn-jnl}

\usepackage{mathrsfs}  

\usepackage{subcaption}
\usepackage{natbib}
\usepackage[utf8]{inputenc} 
\usepackage[T1]{fontenc}  
\usepackage{hyperref}   
\usepackage{url}      
\usepackage{booktabs}   
\usepackage{amsfonts}   
\usepackage{nicefrac}   
\usepackage{microtype}   
\usepackage{xcolor}    
\usepackage{placeins} 
\usepackage{microtype}
\usepackage{graphicx}
\usepackage{subcaption}
\usepackage{booktabs} 
\usepackage{hyperref}

\usepackage{amsmath}
\usepackage{amssymb}
\usepackage{mathtools}
\usepackage{amsthm}
\usepackage{amsfonts,bm}
\usepackage{algorithm,algpseudocode}
\usepackage[capitalize,noabbrev]{cleveref}
\usepackage[textsize=tiny]{todonotes}



\usepackage[utf8]{inputenc} 
\usepackage[T1]{fontenc}    
\usepackage{hyperref}       
\usepackage{url}            
\usepackage{booktabs}       
\usepackage{amsfonts}       
\usepackage{nicefrac}       
\usepackage{microtype}      
\usepackage{xcolor} 
\usepackage{graphicx}%
\usepackage{multirow}%
\usepackage{amsmath,amssymb,amsfonts}%
\usepackage{amsthm}%
\usepackage{mathrsfs}%
\usepackage[title]{appendix}%
\usepackage{xcolor}%
\usepackage{textcomp}%
\usepackage{manyfoot}%
\usepackage{booktabs}%
\usepackage{algorithm}%
\usepackage{algorithmicx}%
\usepackage{algpseudocode}%
\usepackage{listings}%
\usepackage{subcaption} 


\theoremstyle{thmstyleone}%
\newtheorem{theorem}{Theorem}
\newtheorem{proposition}[theorem]{Proposition}%

\theoremstyle{thmstyletwo}%
\newtheorem{example}{Example}%
\newtheorem{remark}{Remark}%

\theoremstyle{thmstylethree}%
\newtheorem{definition}{Definition}%






\newcommand{\R}{\mathbb{R}}







\def\eqref#1{equation~\ref{#1}}









\def\1{\bm{1}}










\DeclareMathAlphabet{\mathsfit}{\encodingdefault}{\sfdefault}{m}{sl}
\SetMathAlphabet{\mathsfit}{bold}{\encodingdefault}{\sfdefault}{bx}{n}
















\raggedbottom

\begin{document}

\title[Affine Invariance in Continuous-Domain \\ Convolutional Neural Networks]{Affine Invariance in Continuous-Domain \\ Convolutional Neural Networks}


\author*[1]{\fnm{Ali} \sur{Mohaddes}}\email{ali.mohaddes@uni-hamburg.de}

\author[1]{\fnm{Johannes} \sur{Lederer}}\email{johannes.lederer@uni-hamburg.de}


\affil*[1]{\orgdiv{Department of Mathematics}, \orgname{University of Hamburg}, \orgaddress{\street{ Mittelweg}, \city{Hamburg}, \postcode{20148}, \state{Hamburg}, \country{Germany}}}




\abstract{		The notion of group invariance helps neural networks in recognizing patterns and features under geometric transformations.
	 Group convolutional neural networks enhance traditional convolutional neural networks by incorporating group-based geometric structures into their design.
	This research studies affine invariance on continuous-domain convolutional neural networks. 
	Despite other research considering isometric invariance or similarity invariance, we focus on the full structure of affine transforms generated by  the group of all invertible $2 \times 2$ real matrices (generalized linear group $\mathrm{GL}_2(\mathbb{R})$).
	We introduce a new criterion to assess the invariance of two signals under affine transformations. The input image is embedded into the affine Lie group $G_2 = \mathbb{R}^2 \ltimes \mathrm{GL}_2(\mathbb{R})$ to facilitate group convolution operations that respect affine invariance.
	Then, we  analyze the convolution of embedded signals  over $G_2$. In sum, our research could eventually extend the scope of geometrical transformations that usual deep-learning pipelines can handle.}

\keywords{Group convolutional neural networks, representation theory, general linear group, affine invariance}



\maketitle



\section{Introduction}





Traditional neural networks are very important in practice and have demonstrated strong performance across a wide range of tasks \cite{abiodun2018state,widrow1994neural, akbari2022disaster, shahin2001artificial}. However, they face computational challenges when dealing with high-dimensional inputs and lack built-in mechanisms for translation invariance \cite{thompson2020computational, mohades2023cardinality}. Convolutional Neural Networks (CNNs) address these limitations by leveraging local connectivity and shared weights, making them well-suited for tasks like image recognition where spatial structure and translation invariance are essential \cite{o2015introduction}. CNNs have achieved remarkable success in analyzing, recognizing, and understanding images, primarily due to their ability to automatically extract useful features from raw data. However, the types of structures that CNNs can capture are often limited to simple symmetries, which may restrict their effectiveness in more complex pattern recognition scenarios. In ordinary CNNs, translation symmetry of signals can often be detected due to the inherent translation equivariance of convolutional layers.  
In Group-Convolutional Neural Networks (G-CNNs), this idea is extended to more general group symmetries, enabling better handling of more complex transformations and potentially improving learning efficiency.


In this paper, we aim to develop a gorup convolutional neural network  that categorizes two input signals as belonging to the same category, when these two signals can be converted into each other through an affine transformation. For example consider two input pictures, $f_1:\mathbb{R}^2 \to \mathbb{R}$ and $f_2:\mathbb{R}^2 \to \mathbb{R}$. A question is whether there is any affine transform that can transform the first picture into the second. This question is akin to investigating if $f_1(\boldsymbol{Ax+b}) = f_2(\boldsymbol{x})$, or at least if the distance between these two images is small.     Selecting a proper kernel can
 lead to stability of convolution under affine transform
(by stability under affine transform we mean $f( \boldsymbol{x})*g = f(\boldsymbol{Ax+b})*g$). 

However, as we will discuss later, it is shown that Equation (\ref{KernelConstraint}) does not have a solution in the general case and conventional convolutional neural network architectures are not useful in  in detecting general affine transforms. Therefore, we need to employ a different three layers convolutional neural network architecture. The idea is to embed the input image into $G_2$ (a process we call lifting), where we can perform convolutions in $G_2$, which is invariant under affine transformation.  This first layer is called the lifting layer. Then we need to perform the convolution in the convolution layer and finally, we have the projection layer which maps the signal in $G_2$ to ordinary signals.  Theorems \ref{1stlayerinvariance}, \ref{2ndlayerinvariance}, and \ref{3rdlayerinvariance} show that if distance of an input function and a suitable affine transform of another input function is less than $\epsilon$ for a fixed $\boldsymbol{A}$ and $\boldsymbol{b}$, are $\epsilon'$-affine invariant in the lifting layer, convolutional layer, and projection layer respectively. Therefore the the output of the designed convolutional neural network is $c\epsilon$-affine invariant (for a constant $c$). We show that this network design is stable for affine invariant inputs.  Furthermore, we demonstrate that the associated computations for these architectures can be simplified from complex convolutions over the transformation group to more straightforward integrals over real space.

This computational advantage becomes particularly meaningful in the context of Group Convolutional Neural Networks (G-CNNs), which extend CNNs to capitalize on the intrinsic geometric properties and symmetries in data, particularly images \citep{cohenc16}.
Unlike their traditional counterparts, G-CNNs harness the power of group theory, a mathematical framework that formalizes transformations and symmetries. This theoretical foundation ensures equivariance with respect to transformations described by the group, thereby enabling the network to maintain predictable behavior under various transformations. 

One striking characteristic of G-CNNs is their ability to characterize geometric features and symmetries throughout the network's architecture. Notably, they excel when dealing with large groups that extend beyond mere translation equivariance. Classical CNNs can be regarded as a special instance of G-CNNs. The real power of G-CNNs becomes evident when more intricate geometric transformations are at play.
Recent G-CNNs elevate feature maps to higher-dimensional, disentangled representations \citep{bekkers2019}. 
 Within these representations, G-CNNs effectively learn intrinsic geometric patterns and transformation behaviors present in the data—such as orientation, scale, and position—thereby reducing the need for traditional geometric data-augmentation techniques.
This not only streamlines the learning process but may also reduces the risk of overfitting.
 Moreover, G-CNNs maintain their predictive behavior under geometric transformations, thanks to their foundation in group theory and, therefore, give rise to the concept of equivariance.
The introduction of G-CNNs to the machine-learning community by \citep{cohenc16} marked the inception of an expanding body of G-CNN literature that consistently highlights many advantages of G-CNNs over conventional CNNs. This literature can be roughly classified into three main categories: discrete G-CNNs, regular continuous G-CNNs, and steerable continuous G-CNNs. Discrete G-CNNs delve into discrete group structures, yielding improved performance in various applications. This approach has been explored in studies by \citep{cohenc16,winkels20183d,dieleman2016exploiting,worrall2018cubenet,hoogeboom2018hexaconv}, collectively contributing to the foundational understanding and practical deployment of discrete G-CNNs.

Regular continuous G-CNNs, as investigated by \citep{oyallon2015deep,bekkers2015training,weiler20183d,zhou2017oriented}, focus on seamless transformations within continuous domains. Their research showcases how G-CNNs can excel in handling continuous data, offering advantages over traditional CNNs in capturing intricate patterns and representations.
Steerable continuous G-CNNs, explored \citep{cohen2018spherical,worrall2017harmonic,kondor2018generalization, thomas2018tensor,andrearczyk2019exploring}, introduce a specialized approach where the convolution kernels are represented in terms of circular or spherical harmonics. This technique, particularly suitable for unimodular groups like roto-translations, enables efficient computation by utilizing basis coefficients.  \citep{knigge2022exploiting} employs separability convolutions to attain equivariance concerning scale-rotation-translation transformations.  \citep{chen2021equivariant} utilizes separability for efficient implementations of $SE(3)$ equivariance.

Our research investigates the property of affine invariance in the context of continuous-domain convolutional neural networks. Our focus are affine spaces formed by the generalized linear group $\mathrm{GL}_2(\mathbb{R})$, the group of all invertible matrices of size $2 \times 2$.  Affine transformations are fundamental operations that combine linear transformations and translations. These transformations are important because they  address distortions of an affine nature. For example, such distortions arise in certain types of CAPTCHA  \citep{CAPTCHA1,AffineArticle1, AffineArticle2} (see Figure \ref{captcha}).  Previous attempts have been made to investigate spaces that maintain affine-equivariance, but they are restricted to strict conditions, such as cases where the determinant equals 1 (expressed as $\mathrm{SO}(n)$). We instead consider affine-invariant spaces across the entire spectrum of invertible matrices. We introduce an  approach, where we assess whether the convolution of the lifting of $f_1$ and $f_2$ to $G_2$ exhibits $G_2$ invariance for every kernel. In order to apply this criterion, an additional step is required, namely, the computation of convolutions over $G_2$. We Solve this technical challenge using QR-decomposition discussed in \citep{schindler1993iwasawa}. 

This paper makes two main technical contributions:
\begin{enumerate}
	\item  We show that G-CNN architectures are stable under affine transformations generated by the generalized linear group $\mathrm{GL}_2(\mathbb{R})$.  This includes a  large spectrum of transformations, such as roto-translations and scale-translations. Thus, this paper is the first that proves the suitability of G-CNN architectures in such a generality.
	
	
	\item We show that the related computations for these architectures can be simplified from complicated convolutions over the transformation group to much simpler integrals over the real space.


\end{enumerate}

\noindent 
More broadly speaking, we show that G-CNNs cater to a considerably broader  spectrum of transformations than what was established before. This is a general result, but it can also be used in specific scenarios to analyze invariant inputs in affinely generated transformations, such as the roto-translation transformation.


\begin{figure}[h]
	\centering
	\includegraphics[width=8cm,height=1.5cm]{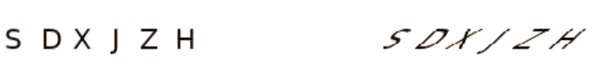}
	\caption{ Original letters and its  affine invariant CAPTCHA.}
	\label{captcha}
\end{figure}




\subsection{Preliminaries}

This section establishes the foundational concepts, terminology, and context along with some illustrative examples. These preliminaries set the stage for the main contributions and discussions presented in the following sections.

Ensuring the equivariance of artificial neural networks (NNs) with respect to a group $G$ is an essential characteristic, as it guarantees that applying transformations to the input preserves all information, merely shifting it to different network locations. It has been determined that when aiming for equivariant NNs, the sole viable choice is to employ layers in which the linear operator is defined through group convolutions. The journey to this conclusion commences with the conventional definition of neural networks layers given by

\begin{equation}\label{NNs}
	\boldsymbol{y}=\sigma\left(\mathcal{K}_{\boldsymbol{w}}f(\boldsymbol{x})+\boldsymbol{b}\right),
\end{equation}
where $\boldsymbol{x} \in \mathcal{X}$ represents the input vector, and $f$ denotes the input signal,  for example in an image input, $\boldsymbol{x}$ is the location of pixels and $f$ is the value that image takes in each pixel. Moreover,  $\mathcal{K}_{\boldsymbol{w}}: \mathcal{X} \rightarrow \mathcal{Y}$ stands as a linear map parameterized by the weight vector $\boldsymbol{w}$, with $\boldsymbol{b} \in \mathcal{Y}$ as a bias term, and $\sigma$ is the activation function. The kernel operator $\mathcal{K}_{\boldsymbol{w}}$ is also defined as follows

\begin{equation*}
	\mathcal{K}_{\boldsymbol{w}} f= \int_{\mathcal{X}} k(\boldsymbol{x},\boldsymbol{y}) f(\boldsymbol{x}) d\lambda_{\mathcal{X}}(\boldsymbol{x}),
\end{equation*}
where $\mathcal{K}_{\boldsymbol{w}} :  \mathbb{L}_2(\mathcal{X}) \to \mathbb{L}_2(\mathcal{Y})$, $d\lambda_{\mathcal{X}}$ is a Radon measure on $\mathcal{X}$, $k$ denotes the kernel function, and  $f \in \mathbb{L}_2(\mathcal{X})$ (square integrable function). To broaden the application of this explanation to the notion of group convolutional neural networks, we revisit a number of crucial definitions.



\begin{definition}[Group]
	A group $(G,\cdot)$ is a set $G$ equipped with a binary operator represented by a dot symbol. The dot operator is associative $((g_1 \cdot g_2) \cdot g_3 = g_1 \cdot (g_2 \cdot g_3)  )$, has an identity element $(e)$. Moreover, every element of the set has an inverse element $(g \cdot g^{-1} = g^{-1} \cdot g = e )$.
\end{definition}
In this context, the set comprises functions, such as translations or rotations. The group operation operates on elements of this set through addition or multiplication \citep{herstein1991topics}. We also need to define normal groups. A normal group is a subgroup $N$ of a group $G$ such that, for every element $g$ in $G$, the conjugate $gNg^{-1}$ is contained within $N$. 

\begin{example}[Translation group]
	The translation group
	in $\mathbb{R}^2$ is denoted by $(\mathbb{R}^2,\cdot)$ consists of all possible translations and is equipped with the below group product and group inverse:
	
	\begin{equation*}
		\begin{split}    
			&g\cdot g' = (\boldsymbol{x}+\boldsymbol{x}') \\
			&g^{-1} = -\boldsymbol{x},
		\end{split}
	\end{equation*}
	where $g = (\boldsymbol{x})$ and $g^{-1} = (-\boldsymbol{x})$ and $\boldsymbol{x} , \boldsymbol{x}' \in \mathbb{R}^2$.
\end{example}
One important example of groups are Lie groups, which are defined as follows:

\begin{definition}[Lie groups]
	A Lie group is a set $G$ with two structures: $G$ is a group and $G$ is a (smooth, real)
	manifold. These structures agree in the following sense: multiplication and inversion are smooth maps.
\end{definition}
Roto-translation symmetries of Euclidean spaces are examples of Lie groups, which is explained in the next example.

\begin{example}[Roto-translation group]
	
	The roto-translation group in $\mathbb{R}^2$ is denoted by $\mathrm{SE}(2)$. The group $\mathrm{SE}(2)=\R^2 \rtimes \mathrm{SO}(2)$ (where $\rtimes$ denotes semidirect product. In a direct product $G = H \times K$, both $H$ and $K$ are normal in $G$. Semidirect products are a relaxation of direct products where only one of the two subgroups must be normal)  consists of translations vectors in $\mathbb{R}^2$, and rotations in $\mathrm{SO}(2)$ and is equipped with the group product and group inverse:
	
	\begin{equation*}
		\begin{split}    
			&g.g' = (\boldsymbol{x},\boldsymbol{R}_{\theta})\cdot(\boldsymbol{x}',\boldsymbol{R}_{\theta'}) = (\boldsymbol{R}_{\theta}\boldsymbol{x}'+\boldsymbol{x},\boldsymbol{R}_{\theta+\theta'})\\
			&g^{-1} = (-\boldsymbol{R}_{\theta}^{-1}\boldsymbol{x},\boldsymbol{R}_{\theta}^{-1}),
		\end{split}   
	\end{equation*}
	for $g = (\boldsymbol{x},\boldsymbol{R}_{\theta})$, $ g'=(\boldsymbol{x}',\boldsymbol{R}_{\theta'})$, and $\boldsymbol{R}_{\theta} = \begin{pmatrix}
		\cos\theta & -\sin\theta \\
		\sin \theta & \cos\theta 
	\end{pmatrix}.$
\end{example}
The group operator provides instructions on how to act on the group elements, ensuring that the result remains within the group. Of particular interest are symmetry groups, where each element in the set represents a symmetry transformation. When the group acts on a specific space, it is referred to as a group action. 

\begin{definition}[Group action]
Let $\chi$ be a set. If $G$ is a group with identity element $e$, then a group action $\alpha$ of $G$ on $\chi$ is a function, $\alpha: G \times \chi \to \chi$, (which usually denotes as $\alpha(h, \boldsymbol{x}) = h \odot \boldsymbol{x}$)
that satisfies identity and compatibility conditions ($e \odot \boldsymbol{x} = \boldsymbol{x}$, $g\odot (h \odot \boldsymbol{x}) = (g\cdot h) \odot \boldsymbol{x}$) for all $g, h\in G$ and all $\boldsymbol{x} \in \chi$.
\end{definition}
For example the action of group $G = \mathrm{SO}(d)$ on space $\chi = \R^d$ could be denoted by $g \odot \boldsymbol{x} = \boldsymbol{R} \boldsymbol{x}$, where $\boldsymbol{x} \in \R^d$ and $\boldsymbol{R}  \in \mathrm{SO}(d)$. 
For the set of points, we perform transformation through group products. While in the convolution kernel, we perform transformation via group representations.  Therefore we need to understand representations. The multiplication within a group instructs us on merging transformations, yet it does not provide guidance on utilizing these transformations on other entities like vectors or signals. To address this, we require the concept of group action and group representations. Nevertheless, frequently, our attention is predominantly directed towards linear group actions operating on vector spaces, and these actions are termed representations.

\begin{definition}[Representation] 
A representation is an invertible linear transformation $\rho(g): V \rightarrow V$ parameterized by group elements $g,h \in G$ that acts on some vector space $V$, which follows the group structure (it is a group homomorphism) via
$$
\rho(g) \rho(h) v=\rho(g \cdot h) v
$$
for $v \in V$.

\end{definition}

\begin{definition}[Regular representation]
Let $f \in \mathbb{L}_2(\mathcal{X})$. Then the regular representation of $G$ acting on $\mathbb{L}_2(\mathcal{X})$ is given by
$$
\rho(g) f(\boldsymbol{x})=f\left(g^{-1}  \boldsymbol{x}\right) .
$$
\end{definition}



\begin{example}[Regular representation of roto-translation group]
Let $f \in \mathbb{L}_2 (\mathbb{R}^2)$  be a two dimensional image, $G = \mathrm{SE}(2)$ denotes the roto-translation group then 
\begin{equation*}
	\rho(g)f( \boldsymbol{y} ) =  f( \boldsymbol{R}_{\theta}^{-1}(\boldsymbol{y}  - \boldsymbol{x}) ).  
\end{equation*}

\end{example}
We continue this part with some additional definitions that we need in the next section.

\begin{definition}[Coset]
Let $H \subset G$ be a subgroup of $G$. Then $g H$ denotes a coset given by
\begin{equation*}
	g H=\bigl\{g \cdot h \mid h \in H \bigr\}.
\end{equation*}
\end{definition}

\begin{definition}[Quotient Space]
Let $H \subset G$ be a subgroup of $G$. Then $G / H$ denotes the quotient space that is defined as the collection of unique cosets $g H \subset G$. Elements of $G / H$ are thus cosets that represents an equivalence class of transformations for which $g \sim \tilde{g}$ are equivalent if there exists a $h \in H$ such that $g=\tilde{g} h$.
\end{definition}

\begin{definition}[Stabilizer]
Let $G$ acts on $\mathcal{X}$ via the action $\odot$. For every $\boldsymbol{x} \in \mathcal{X}$, the stabilizer subgroup of $G$ with respect to the point $\boldsymbol{x}$ is denoted with $\operatorname{Stab}_G(\boldsymbol{x})$ is the set of all elements in $G$ that fix $\boldsymbol{x}$
\begin{equation*}
	\operatorname{Stab}_G(\boldsymbol{x})=\bigl\{g \in G \mid g \odot \boldsymbol{x}=\boldsymbol{x}\bigr\}.
\end{equation*}
\end{definition}
Moreover from \citep{bekkers2019} we know that,  if $\mathcal{X}$ be a homogeneous space of $G$. Then $\mathcal{X}$ can be identified with $G / H$ with $H=\operatorname{Stab}_G\left(\boldsymbol{x}_0\right)$ for any $\boldsymbol{x}_0 \in \mathcal{X}$.

In order to study resemblance of two input functions $f_1$ and $f_2$ we need to provide some definitions.  

\begin{definition}[Affine invariance]
We say that functions $f_1,f_2 \in \mathbb{L}(\R^2)$ are  $\epsilon$-equivalance if  $\|f_1-f_2 \|_1 < \epsilon$ or $\sup _{\boldsymbol{x}} |f_1(\boldsymbol{x})-f_2(\boldsymbol{x}) | < \epsilon$.
\end{definition}

For instance, if a picture shows a slight deviation due to noise, we aim to overlook or disregard this deviation. 

\begin{definition}[Affine invariance]
We say that functions $f_1,f_2 \in \mathbb{L}(\R^2)$ are Affine invariant if there exists $\boldsymbol{A} \in G_2$ so that  $f_1=\rho(\boldsymbol{A})f_2$.
\end{definition}


\begin{definition}[$\epsilon$-Affine invariance]
We say that functions $f_1,f_2 \in \mathbb{L}(\R^2)$ are $\epsilon$-Affine invariant if there exists $\boldsymbol{A} \in G_2$ so that $\|f_1-\rho(\boldsymbol{A})f_2 \|_1 < \epsilon$ or $\sup _{\boldsymbol{x}} |f_1(\boldsymbol{x})-\rho(\boldsymbol{A})f_2(\boldsymbol{x}) | < \epsilon$.

\end{definition}

For simplicity in notation we do not use $\cdot$ and $\odot$ symbols in the next sections. Furthermore, in this paper we use $g$ and $h$ to denote group elements and $f$ and $k$ to denote functions. 

\subsection{Group convolutional neural networks architecture}

One conventional method to build group convolutional neural networks is to apply isotropic convolutions for Equation (\ref{NNs}).
An isotropic $\mathbb{R}^d$ convolution layer maps between planar signals $\mathbb{L}_2\left(\mathbb{R}^d\right)$  with $\mathcal{K}$ a planar correlation given by
\begin{equation*}
(\mathcal{K} f)(\boldsymbol{y})=\int_{\mathbb{R}^d} k(\boldsymbol{x}-\boldsymbol{y}) f(\boldsymbol{x}) \mathrm{d} \boldsymbol{x},
\end{equation*}
and in which it is shown (Theorem 1 from \cite{bekkers2019}) that $k$ satisfies
\begin{equation}\label{KernelConstraint}
\mathrm{for \ all} \ h \in H: \quad k(\boldsymbol{x})=\frac{1}{|\operatorname{det} h|} k\left(h^{-1} \boldsymbol{x}\right).
\end{equation}




Applying isotropic convolutions is limiting because they are constrained by the shape of the kernels. One approach to overcome this limitation, is to lift the signals to the group $G$. Lifting of the input signal, not only addresses the constraints of kernels as noted by \citep{bekkers2019} but also offers advantages in enhancing the performance of image processing, as highlighted in the work by \citep{smets2023pde}.
When we apply lifting we must look for stabilizer $\operatorname{Stab}_G$ when $G$ acts on $G$. In this case we have 

\begin{equation*}
H = \operatorname{Stab}_G(g) = \bigl\{x \in G | xg = g\bigr\} = e.  
\end{equation*}
As a result, Equation (\ref{KernelConstraint}) is fulfilled for all kernels, and there are no longer any limitations imposed on the choice of kernels.

\begin{definition}[Lifting layer $(\mathcal{X}=\mathbb{R}^d, \mathcal{Y}=G)$]
Let $g = (\boldsymbol{x},\boldsymbol{P})$, where $\boldsymbol{x} \in \R^2$ and $\boldsymbol{P} \in \mathrm{GL}_2(\mathbb{R})$. Also let $k: \R^2 \to \R$ be a compact supported distribution.    
A lifting layer maps from $\mathbb{L}_2\left(\mathbb{R}^d\right)$ to $\mathbb{L}_2(G)$ on the group $G$. 
A lifting correlation is given by
\begin{equation*}
	(\mathcal{K} f)(g)=\int_{\mathbb{R}^d} \frac{1}{|\operatorname{det} \boldsymbol{P}|} k\left(g^{-1} \tilde{\boldsymbol{x}}\right) f(\tilde{\boldsymbol{x}}) \mathrm{d} \tilde{\boldsymbol{x}}.
\end{equation*}
\end{definition}

\begin{example}[Lifting for Kronecker delta kernel]\label{LiftingExample}
Let 
\begin{equation*}
	k=\delta(\boldsymbol{x},\boldsymbol{0}_{d\times d}) =
	\begin{cases}
		1 & \text{if $\boldsymbol{x}=\boldsymbol{0} \in \R^d$};  \\
		0 & \text{otherwise.}
	\end{cases}
\end{equation*}
Then for $g = (\boldsymbol{x} , h) $ we have 
\begin{equation*}
	g^{-1}\tilde{\boldsymbol{x}} =  \boldsymbol{P}^{-1}(\tilde{\boldsymbol{x}} - \boldsymbol{x}). 
\end{equation*}
Therefore, 

\begin{equation*}
	k(g^{-1}\tilde{\boldsymbol{x}}) = \delta(\boldsymbol{P}^{-1}(\tilde{\boldsymbol{x}} - \boldsymbol{x}),\boldsymbol{0}_{d\times d}) =
	\begin{cases}
		1 & \text{if $\tilde{\boldsymbol{x}}=\boldsymbol{x}$};  \\
		0 & \text{otherwise.}
	\end{cases}
\end{equation*}
This results that the lifting layer is as the below
\begin{equation*}
	(\mathcal{K} f)(g) = \frac{f(\boldsymbol{x})}{|\det \boldsymbol{P}|}.
\end{equation*}
\end{example}

We also need to discuss this fact that the lifting layer integral exists. A function $f$ on $\mathbb{R}$ is called locally integrable if $f$ is integrable on every bounded interval $[a, b]$ for $a<b$ in $\mathbb{R}$. If $k \in C_c^{\infty}(\mathbb{R})$ and $f$ is locally integrable, then
\begin{equation*}
(f * k)(y)=\int_{-\infty}^{\infty} f(t) k(y-t) d t,
\end{equation*}
exists and is infinitely differentiable on $\mathbb{R}$. 
First of all the input $f$ is usually a picture and therefore the function $f$ is bounded. On the other hand the value of lifted functions on cosets is equal to that of $f$. Therefore the lifted function is bounded as well. We further know that the kernel is locally supported, which results the integrability. After lifting layer we will apply convolution layer which is defined as follows.

\begin{definition}[Group convolution layer $(\mathcal{X}=\mathcal{Y}=G)$]

A group convolution layer maps between $G$-feature maps in $\mathbb{L}_2(G)$. A group convolution is given by
\begin{equation*}
	(f*k)(h)=\int _{G}f(h)k\left(h^{-1}g\right)\,d\mu_G,
\end{equation*}
where $g \in G$ and $\mu_G$ is a Haar measure.
\end{definition}
We also need another layer to again maps to feature maps in $\mathbb{L}_2(\R^d)$, which can be used to imply smoothness of the output. 
\begin{definition}[$\R^d$ Projection layer]
A projection layer maps
between $G$-feature maps in $\mathbb{L}_2(G)$ back to planar feature maps in $\mathbb{L}_2(\R^d)$
\begin{equation}\label{projectionlayer}
	(\mathcal{K}f)(\boldsymbol{x}) = \int_{\tilde{H}} f(\boldsymbol{x},\tilde{h}) d{\tilde h}.
\end{equation}
\end{definition}

Frequently, the focus is on constructing architectures that are invariant, as opposed to equivariant. Invariance to all transformations in \(G\) is accomplished through mean pooling across the entire group \(G\), akin to how global translation invariance is typically obtained by mean or max pooling over the spatial dimensions of feature maps. The global pooling layer can be defined as follows.

\begin{definition}[Global pooling layer]
	A global pooling layer transforms any feature map  into a single scalar value. It adheres to the standard form of the layer as following, 
	
	\begin{equation}
		(\mathcal{K} f) = \int_X f(x) d\mu(x),
	\end{equation}
	where \(\mathcal{K}\) represents a pooling operation over \(X\) and $d\mu(x)$ is a Radon measure on $X$.
\end{definition}

\section{Main Result}
This section reviews the main result of this paper. In the way we explore affine invariant spaces and investigate the convolution integration over $G_2$.

\subsection{Problem Statement}

Our goal is to     
study invariance in affine transformations in continuous-domain convolutional neural networks.  An affine transformation basically combines linear transformations and translations. Affine transformations are denoted as follows

\begin{equation*} 
G_2 = \Bigl\{[\boldsymbol{x},\boldsymbol{A}] : \boldsymbol{x} \in \R^2, \boldsymbol{A} \in \mathrm{GL}_2(\R)  \Bigr\},
\end{equation*} 
where 
\begin{equation*} 
[\boldsymbol{x},\boldsymbol{A}] : \boldsymbol{z} \mapsto \boldsymbol{x} + \boldsymbol{A}\boldsymbol{z}.
\end{equation*} 
The identity is $[\boldsymbol{0},\boldsymbol{I}]$, and, therefore, for all $\boldsymbol{B} \in \mathrm{GL}_2(\R)$ we have $[\boldsymbol{y},\boldsymbol{B}]^{-1} = [-\boldsymbol{B}^{-1}\boldsymbol{y},\boldsymbol{B}^{-1}]$.




The affine transformation is important as we may face affine  type distortions due proximity of the camera with respect to the object. For example, this type of affine distortion could manifest in remote sensing images, as well as in camera imagery which can include various perspective distortions \citep{AffineArticle1}. It is important to note that in an affine transformation, parallel lines in the original image continue to remain parallel in the transformed image.  However, the transformation can introduce distortion in the angles between lines.

		
		
		
		
		
		
		
		
		


This paper explores the use of convolutional neural networks in handling affine transformations, focusing specifically on cases where the transformation matrix $\boldsymbol{A}$
belongs to the general linear group
$\mathrm{GL}_2(\mathbb{R})$. We diverge from the use of isometric convolutions, opting instead for the application of the lifting-projection method, which we elucidate comprehensively. 
While prior investigations have focused on compact groups such as $\mathrm{SO}(2)$, it is important to highlight that the $\mathrm{GL}_2(\mathbb{R})$ group does not fall under the category of compact groups. 
Our alternative method focuses on analyzing the convolution of the lifted forms of the signals $f_1$ and $f_2$ for achieving $G_2$ invariance. We also need to introduce an extra step that encompasses performing convolutions on $G_2$ and address the unique challenges associated with this, including techniques for handling integrations over $G_2$. Therefore, the initial step is to demonstrate that our three-layer group-convolutional neural networks are stable under affine transformations produced by the generalized linear group $\mathrm{GL}_2(\mathbb{R})$. The below theorem addresses this.
\begin{theorem}[Stability of G-CNN architecture]\label{mainTheorem1}
if $\Sigma$ be the G-CNN consisting of
of three, lifting, convolutional, and $\mathbb{R}$-projection layers and if the distance of a function $f_1$ and the affine transform of another function $f_2$ be less than $\epsilon$, then $|\Sigma f_1 - \Sigma f_2| < c \epsilon$. Where $c = \|k_1\|_1^{\R^2} \|k_2\|_1^{G_2}$ and $k_1$ and $k_2$ are kernels of lifting layer and convolution layer respectively.
\end{theorem}
\begin{proof}
We analyze the three layers of the network individually, demonstrating the stability of each layer. This, in turn, ensures the stability of the entire network. Theorems \ref{1stlayerinvariance}, \ref{2ndlayerinvariance}, and \ref{3rdlayerinvariance} detail this process. The current theorem is a direct consequence of these theorems.
\end{proof}

\begin{remark}
	Example \ref{LiftingExample} shows that he stability of affine-invariant systems can decrease as the determinant of the transformation matrix approaches zero. This aligns with geometric intuition: as $\det(\boldsymbol{P}) \to 0$,  the transformation projects initial objects up to a very small distance to lower-dimensional subspace. Consequently, inversion or reconstruction becomes more challenging.
	
\end{remark}

The next theorem asserts that, lifting layer does not change the affine invariance of input signals. The proof of the following theorems could be found in Appendix.

\begin{theorem}[Invariance in the lifiting layer]\label{1stlayerinvariance}
Let  $f_1, f_2: \R^2 \to \R$ are input signals and let there exists an $g \in G_2$ so that $\sup | (f_1 - \rho(g^{-1}) f_2| < \epsilon$ then   
\begin{equation*}
	\sup| (\mathcal{K}f_1)(g) - \rho(g^{-1}) (\mathcal{K}f_2)(g)| < \epsilon \|k\|_1^{\R^2}.
\end{equation*}
\end{theorem}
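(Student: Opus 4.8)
The plan is to reduce the claimed $G_2$-inequality to the pointwise hypothesis $\sup_{\boldsymbol{x}}|f_1(\boldsymbol{x}) - \rho(g^{-1})f_2(\boldsymbol{x})| < \epsilon$ by unwinding the definition of the lifting correlation and pulling the supremum inside the integral. First I would write out both terms explicitly: by the definition of the lifting layer, $(\mathcal{K}f_i)(g) = \int_{\R^2}\frac{1}{|\det h|}k(g^{-1}\tilde{\boldsymbol{x}})f_i(\tilde{\boldsymbol{x}})\,d\tilde{\boldsymbol{x}}$ where $g=(\boldsymbol{x},h)$. The key observation is that $\rho(g^{-1})$ acts on the group-variable argument, so $\rho(g^{-1})(\mathcal{K}f_2)(g)$ should again be expressible as an integral against the same kernel $k(g^{-1}\tilde{\boldsymbol{x}})$ but with $f_2$ replaced by $\rho(g^{-1})f_2$ — i.e. the lifting operation should intertwine the regular representation on $\mathbb{L}_2(\R^2)$ with the regular representation on $\mathbb{L}_2(G_2)$. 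I would verify this equivariance of the lifting layer by a change of variables in the integral (substituting $\tilde{\boldsymbol{x}} \mapsto g\tilde{\boldsymbol{x}}$ or the appropriate group action), tracking the $\frac{1}{|\det h|}$ Jacobian factor carefully.

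\textbf{Main steps.} Once the equivariance $\rho(g^{-1})(\mathcal{K}f_2)(g) = (\mathcal{K}(\rho(g^{-1})f_2))(g)$ is established, the difference becomes
\begin{equation*}
(\mathcal{K}f_1)(g) - \rho(g^{-1})(\mathcal{K}f_2)(g) = \int_{\R^2}\frac{1}{|\det h|}k(g^{-1}\tilde{\boldsymbol{x}})\bigl(f_1(\tilde{\boldsymbol{x}}) - (\rho(g^{-1})f_2)(\tilde{\boldsymbol{x}})\bigr)\,d\tilde{\boldsymbol{x}}.
\end{equation*}
Then I take absolute values, bound $|f_1 - \rho(g^{-1})f_2|$ pointwise by $\epsilon$ using the hypothesis, pull $\epsilon$ out, and what remains is $\epsilon \int_{\R^2}\frac{1}{|\det h|}|k(g^{-1}\tilde{\boldsymbol{x}})|\,d\tilde{\boldsymbol{x}}$. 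A final change of variables $\boldsymbol{u} = g^{-1}\tilde{\boldsymbol{x}} = h^{-1}\tilde{\boldsymbol{x}} - \boldsymbol{x}$, whose Jacobian is exactly $|\det h^{-1}| = \frac{1}{|\det h|}$, turns this into $\epsilon \int_{\R^2}|k(\boldsymbol{u})|\,d\boldsymbol{u} = \epsilon\|k\|_1^{\R^2}$, giving the result. Taking the supremum over $\boldsymbol{x}$ (equivalently over $g$ in the relevant argument) on the left is harmless since the right side is already uniform in $g$.

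\textbf{Anticipated obstacle.} The routine parts are the triangle inequality and the norm bound; the subtle point is the equivariance/intertwining step — confirming that $\rho$ in the statement really refers to the regular representation of $G_2$ acting on functions on $G_2$ (as in the Regular representation definition, $\rho(g)F(g') = F(g^{-1}g')$) and that the lifting correlation genuinely commutes with it. This requires a clean bookkeeping of how $g^{-1}\tilde{\boldsymbol{x}}$ transforms when we replace $g$ by $g$ acted on by another group element, and making sure the $\frac{1}{|\det h|}$ normalization is consistent between the two occurrences. If instead $\rho(g^{-1})$ is meant to act on the $\R^2$-slot of $\mathcal{K}f_2$ in a looser sense, one would need to check that the two interpretations coincide; I expect they do, precisely because the normalization factor $1/|\det h|$ was chosen to make the lifting layer equivariant. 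A secondary technical point is justifying the interchange of absolute value and integral and the applicability of the change-of-variables formula, which is fine given $f_i$ bounded and $k$ a compactly supported distribution (as assumed in the lifting-layer definition and the local-integrability discussion preceding it).
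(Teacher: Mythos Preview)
Your proposal is correct and follows essentially the same approach as the paper: both arguments linearize the difference via the lifting integral, invoke the equivariance of the lifting layer (the paper encodes this by first rewriting $(\mathcal{K}f)(g')=\int_{\R^2}k(\boldsymbol{x})f(g'\boldsymbol{x})\,d\boldsymbol{x}$ via the change of variables you place at the end), bound the integrand pointwise by $\epsilon$, and reduce the remaining integral to $\|k\|_1^{\R^2}$. The only cosmetic difference is ordering: the paper performs the substitution $\boldsymbol{u}=g^{-1}\tilde{\boldsymbol{x}}$ at the outset, so that the equivariance $\rho(g^{-1})(\mathcal{K}f_2)(g')=(\mathcal{K}f_2)(gg')$ becomes a one-line identity, whereas you establish equivariance first and do the Jacobian cancellation last.
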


The next step is to demonstrate the invariance in the convolution layer.

\begin{theorem}[Invariance in the convolutional layer]\label{2ndlayerinvariance}
Let $(\mathcal{K}f_1),(\mathcal{K}f_2): G_2 \to \R$ be the lifting of $f_1,f_2: \R^2 \to \R$ and let there exists an $\tilde{h} \in G_2$ so that $\| (\mathcal{K}f_1)(g) - \rho(\tilde{h}) (\mathcal{K}f_2)(g)\|_{\sup} < \epsilon$ then 
\begin{equation*}
	\| (\mathcal{K}f_1)*k - \rho(\tilde{h}) (\mathcal{K}f_2)*k\|_{\sup}^{G_2} < \epsilon \|k\|_1^{G_2}
\end{equation*}
holds for every kernel $k$ and vice-versa. Where $\| f\|_1^{G_2} = \int_{G_2}|f| d\mu_{G_2}$.
\end{theorem}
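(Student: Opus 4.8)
Write $F_i:=\mathcal{K}f_i$ for the two lifted signals; by the remarks following the lifting-layer definition these are bounded functions on $G_2$ and, since the lifting kernel lies in $C_c^\infty$, smooth — in particular continuous. The statement is an equivalence, so I would prove a forward implication and a converse, in both using the group convolution in the form $(F*k)(h)=\int_{G_2}F(g)\,k(h^{-1}g)\,d\mu_{G_2}(g)$ and invoking only left-invariance of the Haar measure $\mu_{G_2}$. The forward direction follows the template of the preceding theorem. First, left-invariance makes convolution commute with the regular representation, $(\rho(\tilde h)F_2)*k=\rho(\tilde h)(F_2*k)$, so the two readings of ``$\rho(\tilde h)(\mathcal{K}f_2)*k$'' agree and the conclusion's left-hand side is $(F_1-\rho(\tilde h)F_2)*k$. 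Second, for bounded $\phi$, substituting $u=h^{-1}g$ gives $\bigl|(\phi*k)(h)\bigr|\le\|\phi\|_{\sup}\int_{G_2}\bigl|k(h^{-1}g)\bigr|\,d\mu_{G_2}(g)=\|\phi\|_{\sup}\,\|k\|_1^{G_2}$. Applying this with $\phi=F_1-\rho(\tilde h)F_2$ and the hypothesis $\|\phi\|_{\sup}<\epsilon$ gives the stated bound with strict inequality preserved.

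For the converse I would use an approximate-identity argument. Fix non-negative kernels $k_n$ on $G_2$ with $\|k_n\|_1^{G_2}=1$ whose supports shrink to the identity $e=[\boldsymbol{0},\boldsymbol{I}]$ (available on any Lie group by transporting mollifiers through a chart at $e$ and renormalizing against $\mu_{G_2}$). For fixed $g_0$, the substitution $u=g_0^{-1}g$ rewrites $(F*k_n)(g_0)=\int_{G_2}F(g_0u)\,k_n(u)\,d\mu_{G_2}(u)$, which tends to $F(g_0)$ by continuity of $u\mapsto F(g_0u)$ at $u=e$. Applying the hypothesis with $k=k_n$ gives, for every $n$ and every $g_0$, $|(F_1*k_n)(g_0)-(\rho(\tilde h)F_2*k_n)(g_0)|<\epsilon\,\|k_n\|_1^{G_2}=\epsilon$; letting $n\to\infty$ and using pointwise convergence of both terms yields $|F_1(g_0)-\rho(\tilde h)F_2(g_0)|\le\epsilon$, and taking the supremum over $g_0$ finishes the converse.

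The main obstacle is the converse, and within it the approximate identity on $G_2$ is the delicate part: $G_2$ is non-compact and non-unimodular, so one must work with the left Haar measure, check that every step above uses only its left-invariance (it does), and confirm that the lifted signals are regular enough — continuity, inherited from smoothness of the lifting layer — for $F*k_n\to F$ pointwise. A minor, cosmetic point is that the limiting step yields ``$\le\epsilon$'' rather than the strict ``$<\epsilon$''; this is handled by stating the converse with $\le$, or by applying the ``for every $k$'' hypothesis to a slightly rescaled $k_n$. Everything else reduces to the triangle inequality and change of variables under a left-invariant measure, just as in the proof of the preceding theorem.
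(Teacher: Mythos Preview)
Your forward implication is exactly the paper's argument: write the difference as $(F_1-\rho(\tilde h)F_2)*k$, pull the absolute value inside, bound $|F_1-\rho(\tilde h)F_2|$ by its sup norm, and use left-invariance of $\mu_{G_2}$ to turn $\int|k(h^{-1}g)|\,d\mu_{G_2}(g)$ into $\|k\|_1^{G_2}$.

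For the converse the paper takes a shortcut: it simply says ``select $k=\delta(g-h')$'' and stops, so that convolution against $k$ evaluates the integrand and the convolution bound collapses back to the pointwise bound. Your approximate-identity argument is the rigorous incarnation of the same idea --- the Dirac mass is the formal limit of your $k_n$ --- and it actually addresses the issues the paper's one-liner glosses over: each $k_n$ is a genuine $L^1$ kernel to which the ``for every $k$'' hypothesis applies, you only use left-invariance (relevant since $G_2$ is non-unimodular), and you flag the $\le$ versus $<$ subtlety that arises in the limit. What the paper's route buys is brevity; what yours buys is that the converse is proved for the class of kernels for which $\|k\|_1^{G_2}$ is unambiguously defined.
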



The aforementioned finding indicates that to assess the equivalence of two signals, it is necessary to perform a convolutional integration across $G_2$. We investigate this problem in the next section. In the last step we provide the below theorem which states that the function  
$c_{(\mathcal{K},k)} : C(\R^2 , \R) \to \R $ 
defined by 
$\int_{G_2} (\mathcal{K}f)*k \, d\mu_{G_2}(g) $ can be used for characterization of invariant affine functions in the projection layer.

\begin{theorem}[Invariance in the projection layer]\label{3rdlayerinvariance}
If $(\mathcal{K}f_1),(\mathcal{K}f_2) : G_2\to \R$ are lifting of input signals and there exists a $\tilde{h} \in G_2 $ such that $\|(\mathcal{K}f_1)-\rho(\tilde{h})(\mathcal{K}f_2)\|_1^{G_2} < \epsilon$. Then we have 
\begin{equation*}
	\Bigr|\int_{G_2} \bigr((\mathcal{K}f_1)*k -(\mathcal{K}f_2)*k\bigl)(h) d\mu_{G_2}(h) \Bigl|   \le \epsilon \|k\|_1^{G_2}
\end{equation*}
\end{theorem}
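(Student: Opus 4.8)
The plan is to reduce this statement to the triangle inequality together with the fact that convolution by $k$ on $G_2$ is a bounded operator on $(L^\infty, L^1)$, exactly mirroring the argument already used in the previous theorem. First I would fix an arbitrary $h \in G_2$ and write the integrand pointwise as
\begin{equation*}
\bigl((\mathcal{K}f_1)*k\bigr)(h) - \bigl((\mathcal{K}f_2)*k\bigr)(h) = \int_{G_2} \bigl((\mathcal{K}f_1)(g) - (\mathcal{K}f_2)(g)\bigr)\, k(h^{-1}g)\, d\mu_{G_2}(g),
\end{equation*}
using linearity of the convolution. The hypothesis supplies a single $\tilde h$ with $\|(\mathcal{K}f_1) - \rho(\tilde h)(\mathcal{K}f_2)\|_1^{G_2} < \epsilon$; I would insert $\rho(\tilde h)(\mathcal{K}f_2)$ as an intermediate term and bound $\|(\mathcal{K}f_1) - (\mathcal{K}f_2)\|$ by combining it with $\|\rho(\tilde h)(\mathcal{K}f_2) - (\mathcal{K}f_2)\|$; by left-invariance of the Haar measure $\mu_{G_2}$, the map $\rho(\tilde h)$ is an isometry on $L^1(G_2)$, which is what keeps the relevant norm controlled.

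Next I would take absolute values inside both integrals and push them through: $\bigl|\int_{G_2}\bigl((\mathcal{K}f_1)*k - (\mathcal{K}f_2)*k\bigr)(h)\, d\mu_{G_2}(h)\bigr| \le \int_{G_2}\int_{G_2} \bigl|(\mathcal{K}f_1)(g) - (\mathcal{K}f_2)(g)\bigr|\,\bigl|k(h^{-1}g)\bigr|\, d\mu_{G_2}(g)\, d\mu_{G_2}(h)$. I would then apply Fubini (legitimate once everything is nonnegative) to integrate over $h$ first; using left-invariance of $\mu_{G_2}$ the inner integral $\int_{G_2}|k(h^{-1}g)|\, d\mu_{G_2}(h)$ equals $\|k\|_1^{G_2}$ independently of $g$. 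What remains is $\|k\|_1^{G_2} \cdot \int_{G_2}|(\mathcal{K}f_1)(g) - (\mathcal{K}f_2)(g)|\, d\mu_{G_2}(g)$, and the hypothesis (after the isometry step above) bounds the last factor by $\epsilon$, giving the claimed inequality.

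The main obstacle I anticipate is bookkeeping around the term $\rho(\tilde h)$: the hypothesis is stated with $\rho(\tilde h)(\mathcal{K}f_2)$ inside the norm, but the conclusion compares $(\mathcal{K}f_1)*k$ with $(\mathcal{K}f_2)*k$ without the $\rho(\tilde h)$, so one must either argue that $\int_{G_2}\rho(\tilde h)F \, d\mu_{G_2} = \int_{G_2} F\, d\mu_{G_2}$ (again left-invariance of the Haar measure, so the $\rho(\tilde h)$ is harmless after integration over $G_2$), or absorb the discrepancy $\|\rho(\tilde h)(\mathcal{K}f_2) - (\mathcal{K}f_2)\|$ — which is not generally small. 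The cleanest route is the former: since $\int_{G_2}\bigl((\mathcal{K}f_1)*k - \rho(\tilde h)(\mathcal{K}f_2)*k\bigr) d\mu = \int_{G_2}\bigl((\mathcal{K}f_1)*k - (\mathcal{K}f_2)*k\bigr) d\mu$ by invariance of $\mu_{G_2}$, the estimate from the previous theorem applied in $L^1$ form delivers the result directly, and the only genuine technical point is justifying the interchange of integrals, which follows from Tonelli since the integrand is nonnegative and $k$ has compact support.
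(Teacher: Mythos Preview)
Your ``cleanest route'' in the final paragraph is correct and is essentially the paper's own argument: the paper also expands the double integral, uses left-invariance of $\mu_{G_2}$ twice (once in $g$, once in $h$) to replace $(\mathcal{K}f_2)$ by $(\mathcal{K}f_2)\circ\tilde h^{-1}=\rho(\tilde h)(\mathcal{K}f_2)$ inside the integral, and then finishes with the $L^1$ Young-type bound $\|F*k\|_1^{G_2}\le\|F\|_1^{G_2}\|k\|_1^{G_2}$. The only difference is packaging: you phrase the two change-of-variable steps as the single identity $\int_{G_2}(\rho(\tilde h)F*k)\,d\mu_{G_2}=\int_{G_2}(F*k)\,d\mu_{G_2}$, while the paper carries them out explicitly at the level of the iterated integral; your middle paragraph (bounding $\|(\mathcal{K}f_1)-(\mathcal{K}f_2)\|_1$ directly) is indeed a dead end, and you were right to abandon it.
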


\subsection{Convolution Computation}

Before illustrating how to compute the convolution over the group $G_2$, we remark some ingredients which is essential to compute the convolution over $G_2$. We finally show that the convolution over $G_2$ can be computed through Fourier transform and  integration over real valued  space. 

In our study, we adopt a straightforward approach to calculate the $G_2$-invariant convolution for a broader kernel, which can be formulated as follows:


\begin{equation}
\int_{G_2} f([\boldsymbol{x},\boldsymbol{A}])k([\boldsymbol{y},\boldsymbol{B}]^{-1}[\boldsymbol{x},\boldsymbol{A}]) d \mu_{G_2}. 
\end{equation}
Using the Stone–Weierstrass theorem, in the setup of continuous functions with respect to sup-norm, $C({G_2},\R)=C(\mathrm{GL}_2(\R)\ltimes \R^2 , \R)$,  which asserts that summation of separable functions are dense in $C({G_2},\R)$, we reduce the kernel sets to functions of form $k(\boldsymbol{y},\boldsymbol{A}) = \sum_{i=1}^{M} k_{1_i}(\boldsymbol{y})k_{2_i}(\boldsymbol{A}). $
This reduction help us to benefit Fourier transforms to simplify some parts of our calculations. We use QR parametrization of $\mathrm{GL}_2(\R)$ which aids us in utilizing numerical approaches,  for example are introduced in \citep{eshkuvatov2013polynomial}.
Now, we illustrate the outcomes presented in \citep{schindler1993iwasawa,milad2023harmonic}, which are pertinent to our calculations.
Let 
\begin{equation*}
K_0=\left\{\left(\begin{array}{cc}
	s & -t \\
	t & s
\end{array}\right): s, t \in \mathbb{R}, s^2+t^2>0\right\}, 
\end{equation*}
and 
\begin{equation*}
H_{(1,0)}=\left\{\left(\begin{array}{ll}
	1 & 0 \\
	u & v
\end{array}\right): u, v \in \mathbb{R}, v \neq 0\right\}.
\end{equation*}
It is shown that $\mathrm{GL}_2(\mathbb{R})=K_0 H_{(1,0)}, K_0 \cap H_{(1,0)}=\boldsymbol{I}$, where $\boldsymbol{I}$ denotes the identity matrix, and $(\boldsymbol{M}, \boldsymbol{C}) \rightarrow \boldsymbol{M C}$ is a homeomorphism of $K_0 \times H_{(1,0)}$ with $\mathrm{GL}_2(\mathbb{R})$.  From \citep{milad2023harmonic} we have
\begin{equation}\label{Eq:GnDecomposition}
\int_{G_n} f \, d \mu_{G_n}=\int_{\mathrm{GL}_n(\mathbb{R})} \int_{\mathbb{R}^n} f[\boldsymbol{x}, \boldsymbol{A}] \frac{d \boldsymbol{x} d \mu_{\mathrm{GL}_n(\mathbb{R})}(\boldsymbol{A})}{|\operatorname{det}(\boldsymbol{A})|} \text {, for all } f \in C_c\left(G_n\right),
\end{equation}
where $C_c(G)$ denotes the space of continuous $\mathbb{C}$-valued functions of compact support on $G$.
For any integrable function $f$ on $\mathrm{GL}_2(\mathbb{R})$, the Haar integral on $\mathrm{GL}_2(\mathbb{R})$ can be expressed as

\begin{equation}\label{GL_Decomposition}
\int_{\mathrm{GL}_2(\mathbb{R})} f \, d \mu_{\mathrm{GL}_2(\mathbb{R})}=\int_{K_0} \int_{H_{(1,0)}} f(\boldsymbol{M C})|\operatorname{det}(\boldsymbol{C})| d \mu_{H_{(1,0)}} d \mu_{K_0}.
\end{equation}
The map $[u, v] \rightarrow\left(\begin{array}{ll}1 & 0 \\ u & v\end{array}\right)$ is an isomorphism of the group $G_1=\mathbb{R} \rtimes \mathbb{R}^*$ with $H_{(1,0)}$.
When $n=1, \mathrm{GL}_1(\mathbb{R})$ can be identified with $\mathbb{R}^*$ and $G_1$ identified with $\mathbb{R} \rtimes \mathbb{R}$. We recall that $\int_{\mathbb{R}^*} f d \mu_{\mathbb{R}^*}=\int_{\mathbb{R}} f(b) \frac{d b}{|b|}$, where the integral on the right hand side is the Lebesgue integral on $\mathbb{R}$, and

\begin{equation}\label{Eq:G1Int}
\int_{G_1} f \, d \mu_{G_1}=\int_{\mathbb{R}} \int_{\mathbb{R}} f[y, b] \frac{d y d b}{b^2}.
\end{equation}

\subsection{Integral over $G_2$}
A difficult aspect in the implementation of group convolutional neural networks involves performing convolutions across the group. This segment addresses this particular challenge by delving into the problem, which we will break down into the more manageable tasks of calculating Fourier transforms and conducting integrations in real-valued space.
We have the below theorem for the integration over $G_2$.

\begin{theorem}\label{G2integration}
Let $\boldsymbol{A}=\left(\begin{array}{ll}a & b \\ c & d\end{array}\right) \in \mathrm{GL}_2(\mathbb{R})$ and let the kernel is separable meaning that $k(\boldsymbol{x},\boldsymbol{A}) = k_1(\boldsymbol{x})k_2(\boldsymbol{A})$ and consider the one to one transform between $H$ and $H^*$ so that $H^*(s,t,u,v,\boldsymbol{B},\boldsymbol{y}) := H_{f,k}(a,b,c,d,\boldsymbol{B},\boldsymbol{y})$, where $a = s-ut$, $c =t+us $, $b =-t/v $, and $d = s/v $, then we have

\begin{equation*}
	\int_{G_2} f([\boldsymbol{x},\boldsymbol{A}])k([\boldsymbol{y},\boldsymbol{B}]^{-1}[\boldsymbol{x},\boldsymbol{A}]) d \mu_{G_2} =  \int_{\R}  \int_{\R} \int_{\R} \int_{\R} H^*(s,t,u,v,\boldsymbol{B},\boldsymbol{y})\frac{du dv}{|v| }\frac{ds dt}{s^2+t^2}.
\end{equation*}
where 
\begin{equation*}
	H_{f,k}(\boldsymbol{A},\boldsymbol{B},\boldsymbol{y})  =   \frac{k_2(\boldsymbol{A}\boldsymbol{B}^{-1}) }{{|\operatorname{det}(\boldsymbol{A})||\operatorname{det}(\boldsymbol{B}^{-1})|}} \mathcal{F}^{-1}\Bigl(F(\boldsymbol{u}) K_1(\boldsymbol{B}^{\top}\boldsymbol{u}  ) \Bigr).
\end{equation*}
\end{theorem}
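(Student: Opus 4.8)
The plan is to unfold the convolution $\int_{G_2} f([\boldsymbol{x},\boldsymbol{A}])k([\boldsymbol{y},\boldsymbol{B}]^{-1}[\boldsymbol{x},\boldsymbol{A}])\,d\mu_{G_2}$ step by step, first separating the $\R^2$-part from the $\mathrm{GL}_2(\R)$-part via Equation~(\ref{Eq:GnDecomposition}), then exploiting separability of the kernel to isolate a planar convolution that can be rewritten through the Fourier transform, and finally changing variables on $\mathrm{GL}_2(\R)$ using the $K_0H_{(1,0)}$ (QR-type) parametrization of Equation~(\ref{GL_Decomposition}) together with Equation~(\ref{Eq:G1Int}) to land on the claimed fourfold real integral.

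First I would compute the group product $[\boldsymbol{y},\boldsymbol{B}]^{-1}[\boldsymbol{x},\boldsymbol{A}]$. Using $[\boldsymbol{y},\boldsymbol{B}]^{-1} = [-\boldsymbol{B}^{-1}\boldsymbol{y},\boldsymbol{B}^{-1}]$ and the composition rule $[\boldsymbol{u},\boldsymbol{C}][\boldsymbol{v},\boldsymbol{D}] = [\boldsymbol{u}+\boldsymbol{C}\boldsymbol{v},\boldsymbol{C}\boldsymbol{D}]$, the translation part is $\boldsymbol{B}^{-1}(\boldsymbol{x}-\boldsymbol{y})$ and the linear part is $\boldsymbol{B}^{-1}\boldsymbol{A}$. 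Hence, with $k(\boldsymbol{z},\boldsymbol{C}) = k_1(\boldsymbol{z})k_2(\boldsymbol{C})$, the integrand factors as $f([\boldsymbol{x},\boldsymbol{A}])\,k_1(\boldsymbol{B}^{-1}(\boldsymbol{x}-\boldsymbol{y}))\,k_2(\boldsymbol{B}^{-1}\boldsymbol{A})$. Applying Equation~(\ref{Eq:GnDecomposition}) turns the $G_2$ integral into $\int_{\mathrm{GL}_2(\R)}\frac{k_2(\boldsymbol{B}^{-1}\boldsymbol{A})}{|\det\boldsymbol{A}|}\big(\int_{\R^2} f([\boldsymbol{x},\boldsymbol{A}])\,k_1(\boldsymbol{B}^{-1}(\boldsymbol{x}-\boldsymbol{y}))\,d\boldsymbol{x}\big)\,d\mu_{\mathrm{GL}_2(\R)}(\boldsymbol{A})$. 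The inner $\boldsymbol{x}$-integral is (up to the $\boldsymbol{A}$-dependence hidden in $f$) a convolution in $\boldsymbol{y}$ of $f$ against $\boldsymbol{x}\mapsto k_1(\boldsymbol{B}^{-1}\boldsymbol{x})$; by the convolution theorem and the scaling rule for Fourier transforms, $\widehat{k_1(\boldsymbol{B}^{-1}\cdot)}(\boldsymbol{u}) = |\det\boldsymbol{B}|\,K_1(\boldsymbol{B}^{\top}\boldsymbol{u})$, so this inner integral equals $|\det\boldsymbol{B}|\,\mathcal{F}^{-1}\big(F(\boldsymbol{u})K_1(\boldsymbol{B}^{\top}\boldsymbol{u})\big)$ evaluated appropriately, which matches the definition of $H_{f,k}(\boldsymbol{A},\boldsymbol{B},\boldsymbol{y})$ after absorbing the determinant factors (noting $|\det\boldsymbol{B}| = 1/|\det\boldsymbol{B}^{-1}|$). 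This identifies the $\mathrm{GL}_2(\R)$-integrand with $H_{f,k}(\boldsymbol{A},\boldsymbol{B},\boldsymbol{y})$ as stated.

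Next I would carry out the change of variables on $\mathrm{GL}_2(\R)$. By Equation~(\ref{GL_Decomposition}), $\int_{\mathrm{GL}_2(\R)} H_{f,k}\,d\mu_{\mathrm{GL}_2(\R)} = \int_{K_0}\int_{H_{(1,0)}} H_{f,k}(\boldsymbol{M}\boldsymbol{C},\boldsymbol{B},\boldsymbol{y})\,|\det\boldsymbol{C}|\,d\mu_{H_{(1,0)}}\,d\mu_{K_0}$. Parametrizing $\boldsymbol{M} = \left(\begin{smallmatrix} s & -t \\ t & s\end{smallmatrix}\right) \in K_0$ and $\boldsymbol{C} = \left(\begin{smallmatrix} 1 & 0 \\ u & v\end{smallmatrix}\right) \in H_{(1,0)}$, the product $\boldsymbol{M}\boldsymbol{C} = \left(\begin{smallmatrix} s-ut & -tv \\ t+us & sv\end{smallmatrix}\right)$ gives exactly the stated substitution $a = s-ut$, $c = t+us$, $b = -tv$, $d = sv$ — here I note the theorem statement writes $b = -t/v$, $d = s/v$, which I would reconcile as a typo for $b=-tv$, $d=sv$ or else track whichever convention makes $\boldsymbol{M}\boldsymbol{C}$ consistent. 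The Haar measure on $K_0 \cong \C^*$ is $\frac{ds\,dt}{s^2+t^2}$, and under the isomorphism $H_{(1,0)}\cong G_1 = \R\rtimes\R^*$ Equation~(\ref{Eq:G1Int}) gives $d\mu_{H_{(1,0)}} = \frac{du\,dv}{v^2}$; since $|\det\boldsymbol{C}| = |v|$, the factor $|\det\boldsymbol{C}|\,d\mu_{H_{(1,0)}} = \frac{du\,dv}{|v|}$. Renaming the composite integrand $H^*(s,t,u,v,\boldsymbol{B},\boldsymbol{y}) := H_{f,k}(a,b,c,d,\boldsymbol{B},\boldsymbol{y})$ under this substitution yields precisely $\int_\R\int_\R\int_\R\int_\R H^*\,\frac{du\,dv}{|v|}\,\frac{ds\,dt}{s^2+t^2}$, which is the claim.

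\textbf{Main obstacle.} The routine parts are the group-product computation and the bookkeeping of Jacobian/determinant factors. The genuinely delicate step is the Fourier-transform rewriting of the inner planar integral: I must be careful that $f([\boldsymbol{x},\boldsymbol{A}])$ as a function of $\boldsymbol{x}$ is what gets transformed (so $F$ is really the transform of $\boldsymbol{x}\mapsto f([\boldsymbol{x},\boldsymbol{A}])$, carrying an implicit $\boldsymbol{A}$-dependence), that the convolution-versus-correlation orientation and the sign in $\boldsymbol{B}^{-1}(\boldsymbol{x}-\boldsymbol{y})$ are handled consistently, and that the scaling identity $\widehat{k_1(\boldsymbol{B}^{-1}\cdot)}(\boldsymbol{u}) = |\det\boldsymbol{B}|\,K_1(\boldsymbol{B}^{\top}\boldsymbol{u})$ is applied with the correct transpose-inverse placement. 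Getting these conventions aligned so that the determinant factors collapse exactly to $\frac{1}{|\det\boldsymbol{A}||\det\boldsymbol{B}^{-1}|}$ in the definition of $H_{f,k}$ is where the proof needs the most care; everything downstream is a substitution.
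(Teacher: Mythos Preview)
Your proposal is correct and follows essentially the same route as the paper: split the $G_2$ integral via Equation~(\ref{Eq:GnDecomposition}), use separability and the Fourier convolution theorem on the $\mathbb{R}^2$ part to obtain $H_{f,k}$, then apply the $K_0H_{(1,0)}$ decomposition (\ref{GL_Decomposition}) together with (\ref{Eq:G1Int}) to reduce the $\mathrm{GL}_2(\R)$ integral to the fourfold real integral. You even correctly flagged the $b=-tv$, $d=sv$ versus $b=-t/v$, $d=s/v$ discrepancy and the $\boldsymbol{B}^{-1}\boldsymbol{A}$ versus $\boldsymbol{A}\boldsymbol{B}^{-1}$ ordering, which are genuine inconsistencies in the paper's statement rather than errors in your argument.
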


The proof of this theorem is discussed in the appendix. Applying this result we can use the numerical methods in \citep{eshkuvatov2013polynomial} to compute the former integral as it has singularity in $s=0, t=0$.
Note that we can write $K_0$ as $\R^+ \rtimes \mathrm{SO}(1)$ where
\begin{equation*}
\left(\begin{array}{cc}
	s & -t \\
	t & s
\end{array}\right) = (s^2+t^2) \times     \left(\begin{array}{cc}
	r \cos \theta & -r \sin \theta \\
	r \sin \theta & r \cos \theta
\end{array}\right).
\end{equation*}

The final step  that necessitates computation is the integration within the projection layer. In the context of our affine transformation, the stabilizer is specifically 
$\mathrm{GL}_2(\R)$. We refrain from delving into the intricacies of this process, as it bears resemblance to the earlier scenario.

Now we first provide one example on $G_1$ this example gives us some insights for examples on $G_2$. We also will see how two class of complicated functions are equivalent with some intervals on $\R$ which are easier to investigate. 
\begin{example}\label{equivalent_inputs}
Let the continuous input function be defined as $f(a)=\begin{cases}
	1 & \text{if $a \in [t_1,t_2]$} \\
	0 & \text{otherwise}
\end{cases},$
		then according to Example (\ref{LiftingExample}) for the lifting of $f$ we have $(\mathcal{K} f)[a,b]=\begin{cases}
			\frac{1}{|b|} & \text{if $a \in [t_1,t_2]$} \\
			0 & \text{otherwise}
		\end{cases},$
			then we have:
			\begin{equation*}
				\begin{split}
					&\int_{\mathbb{R}^2} \mathcal{K}(f[a,b])k([x,y]^{-1}[a,b])d\mu_{G_1}(a,b)  \\
					&=\int_{\mathbb{R}^2} \frac{1}{|b|}k([-\frac{y}{x},\frac{1}{x}][a,b])\frac{da db}{b^2}\\
					&=\int_{\mathbb{R}^2} \frac{1}{|b|b^2}k([\frac{a-y}{x},\frac{b}{x}])da db\\
				\end{split} 
			\end{equation*}
			We can define a separable kernel so that we make the computations easier. We define it as  $
			k(s,t) = s^3 \exp{(s)}\frac{1}{\sqrt{2\pi}}exp{(-t^2)}
			$. Employing this definition we have:
			\begin{equation*}
				\begin{split}
					&\int_{\mathbb{R}^2} \frac{1}{|b|b^2}k([\frac{a-y}{x},\frac{b}{x}])da db
					= \int_{t_1}^{t_2}\exp(\frac{a-y}{x}) \int_{\mathbb{R}}\frac{1}{|b|b^2}\frac{b^3}{x^3}\frac{1}{\sqrt{2\pi}}\exp(-\frac{b}{x})^2da db \\
					&= \int_{t_1}^{t_2}\exp(\frac{a-y}{x}) \int_{\mathbb{R}}\frac{1}{|b|b^2}\frac{b^3}{x^3}\frac{1}{\sqrt{2\pi}}\exp(-\frac{b}{x})^2da db
					= \frac{1}{x}\exp{(\frac{t_2-y}{x})}-\frac{1}{x}\exp{(\frac{t_1-y}{x})}
				\end{split} 
			\end{equation*}


		\end{example}
		
		\begin{figure}[h]
			\centering
			\includegraphics[width=9cm,height=7cm]{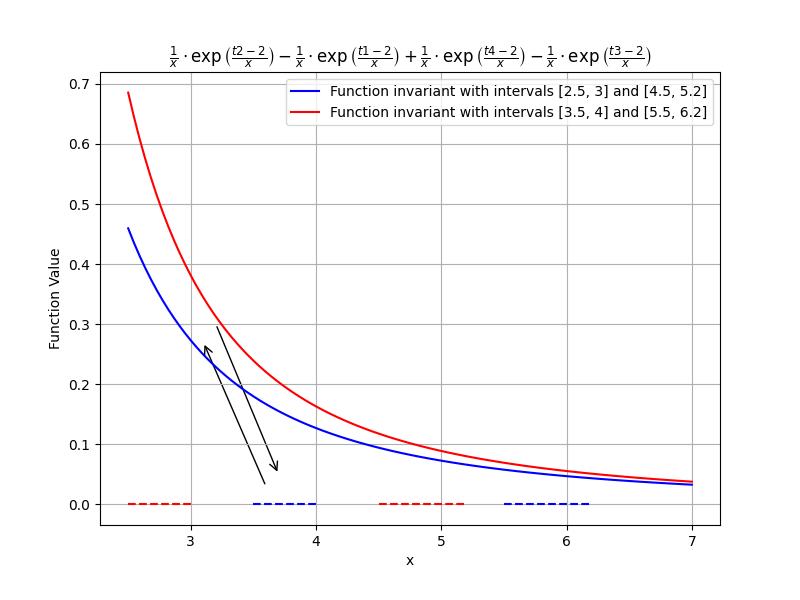}
			\caption{Two family of related affine invariant functions denoted by dashed and solid lines.}
			\label{captcha}
		\end{figure}

		This example establishes a connection between the invariance under affine transformations of more complex functions and the invariance of certain simple step functions. It also underscores the importance of selecting an appropriate kernel function.
		In the figure below, we selected two distinct intervals as input functions and then applied an affine transform 
		[1,1]. We then drew the equivalence functions for a fixed value of
		$y$. It shows that we can study invariance under affine transformations of more complex shapes through the invariance of certain simple intervals.

		\section{Experiments}
		
	In this section, we present simulation results to demonstrate the practical significance of our theoretical framework. Specifically, we constructed a neural network consisting of a three-layer group convolution module, followed by a fully connected neural network. The design of the group convolution layers was guided by the methodologies described in Examples \ref{LiftingExample} and \ref{equivalent_inputs}, which were used to define the kernels for the lifting and convolution operations. Projection is accomplished using straightforward averaging. For comparison, we also implemented a standard convolutional neural network that shares an identical fully connected component and uses the same set of parameters. Both models were evaluated on a dataset generated using affine transformations to test their performance under geometric variability. As the results will show, our proposed group convolution network can outperform the conventional CNN, particularly in scenarios where the number of input samples is relatively low. In the following, we provide a more detailed explanation of our simulation setup and results.

	To evaluate the proposed G-CNN, we generated a synthetic digit dataset using affine-transformed images of digits \(0\) through \(9\). Each digit was rendered using a standard font on a fixed-size grayscale canvas, followed by an affine transformation incorporating stretch, shear, rotation, and translation.
		A fixed number of transformed samples were created per digit class, with translation parameters sampled uniformly to introduce spatial variability. The final dataset contains \(10 \times N\) labeled images, scalable via \(N\), allowing for controlled experiments across models such as G-CNN and CNN Networks. 
	
	To evaluate the effectiveness of the proposed G-CNN, we compare its performance against a standard CNN. The experiment is conducted on a synthetic digit dataset, where digits $0$ through $9$ are rendered and then subjected to an affine transformation characterized by a fixed matrix, with a random bias added to distribute them within the frame. In the first experiment, we used the following matrix to define our affine transformation $A = \begin{bmatrix}
		2.5 & 0.7 \\
		0.6 & 1.8
	\end{bmatrix}.$
	
 As shown in Fig.~\ref{fig:performance_comparison}, the G-CNN  outperforms the standard CNN across all dataset sizes. The G-CNN demonstrates strong generalization even with limited data, achieving high accuracy with a few samples per class. In contrast, the standard CNN requires more data to reach comparable performance. We also trained our network on data generated using the matrix \(A = \begin{bmatrix} 1 & 0.7 \\ 0.7 & 1 \end{bmatrix}\) As illustrated in Fig.~\ref{fig:performance_comparison2}, the G-CNN outperform CNN when the input data size is limited.

		\begin{figure}[htbp]
			\centering
			\includegraphics[width=0.7\linewidth]{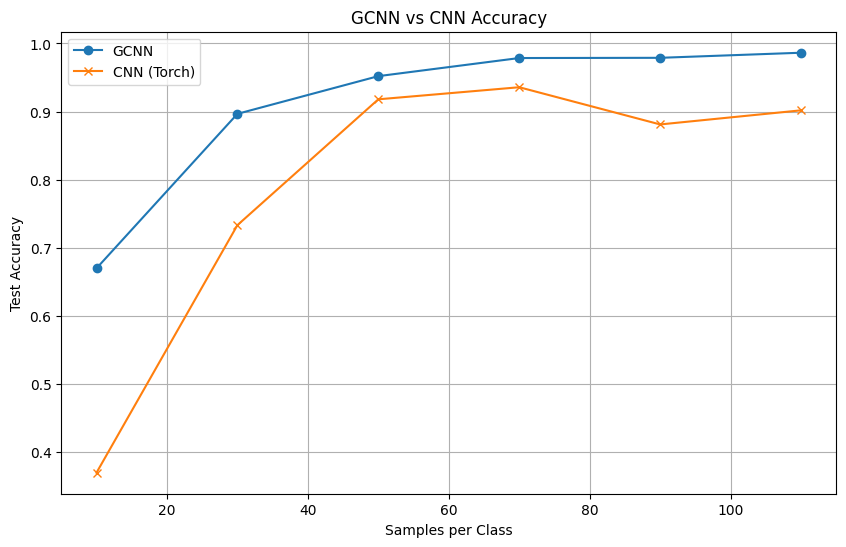} 
			\caption{Accuracy comparison between G-CNN and standard CNN across varying sample sizes. G-CNN outperforms CNN under affine transformation \(A = \begin{bmatrix} 2.5 & 0.7 \\ 0.6 & 1.8 \end{bmatrix}\).
			}
			\label{fig:performance_comparison}
		\end{figure}

	\begin{figure}[htbp]
	\centering
	\includegraphics[width=0.7\linewidth]{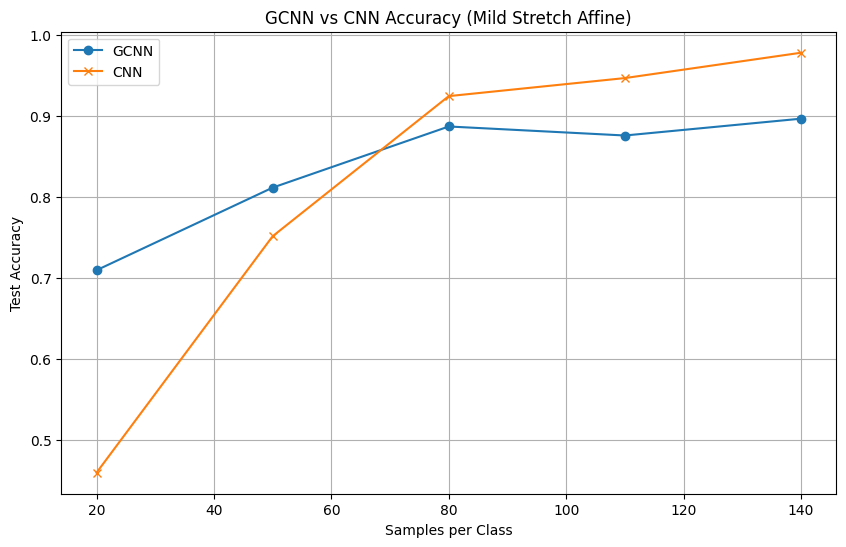} 
	\caption{Accuracy comparison between G-CNN and standard CNN across varying sample sizes. G-CNN outperforms CNN for relatively small sample sizes under affine transformation \(A = \begin{bmatrix} 1 & 0.7 \\ 0.7 & 1 \end{bmatrix}\).
	}
	\label{fig:performance_comparison2}
\end{figure}

To further assess the robustness of the proposed Group Convolutional Neural Network (G-CNN), we present a qualitative comparison with a standard convolutional neural network (CNN) on a transformed digit recognition task. The evaluation uses a synthetic dataset with 40 training samples per digit, where each digit is subjected to an affine transformation defined by $A = \begin{bmatrix}
	1 & 2 \\
	2 & 1
\end{bmatrix}$.


Affine transformations are composed of rotation, stretch, and translation, introducing significant geometric variability into the digit appearance. Fig.~\ref{fig:qualitative_comparison} shows the prediction results for both models on a set of test images, with each sample annotated by the true label (T) and predicted label (P).

The G-CNN demonstrates strong affine invariance, correctly identifying more digits under transformation compared to the standard CNN. Quantitatively, the G-CNN achieves a test accuracy of 0.6950, significantly outperforming the CNN, which reaches only 0.3150. These results highlight the advantage of incorporating group-equivariant structure in neural network design, particularly in scenarios involving geometric variability. 

We also compare the prediction performance of the G-CNN and a standard CNN under moderate affine transformations defined by $A = \begin{bmatrix} 1 & 0.7 \\ 0.7 & 1 \end{bmatrix}$.


As shown in Fig.~\ref{fig:affine_eval2}, the G-CNN demonstrates stronger robustness to geometric deformation, producing more accurate predictions across various transformed digit inputs. The G-CNN achieves a mean test accuracy of 0.800, outperforming the standard CNN, which reaches a mean accuracy of 0.720.

		\begin{figure}[htbp]
			\centering
			\begin{subfigure}[b]{0.47\textwidth}
				\centering
				\includegraphics[width=\linewidth]{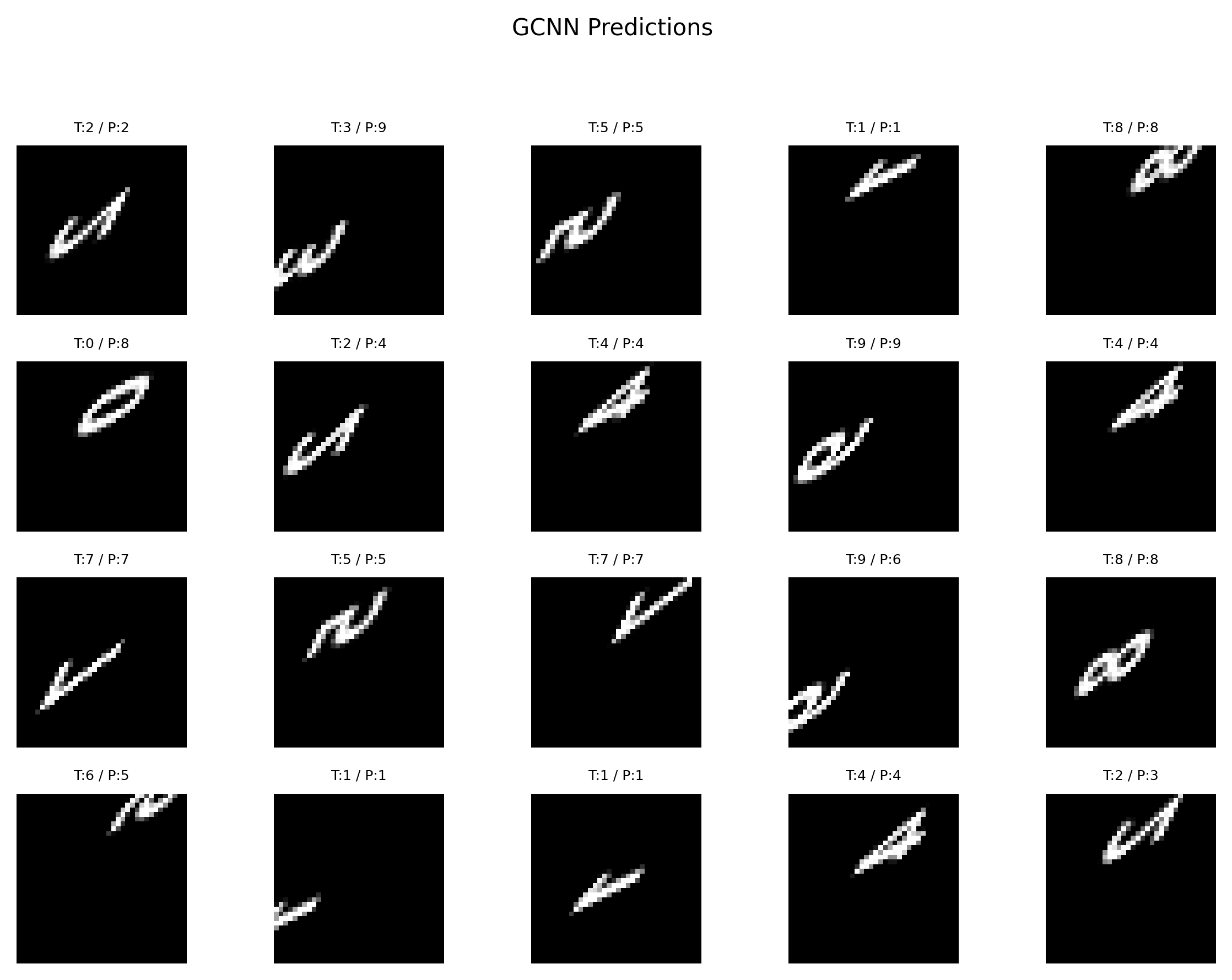}
				\caption{Group Convolution Network}
				\label{fig:group_conv}
			\end{subfigure}
			\hfill
			\begin{subfigure}[b]{0.47\textwidth}
				\centering
				\includegraphics[width=\linewidth]{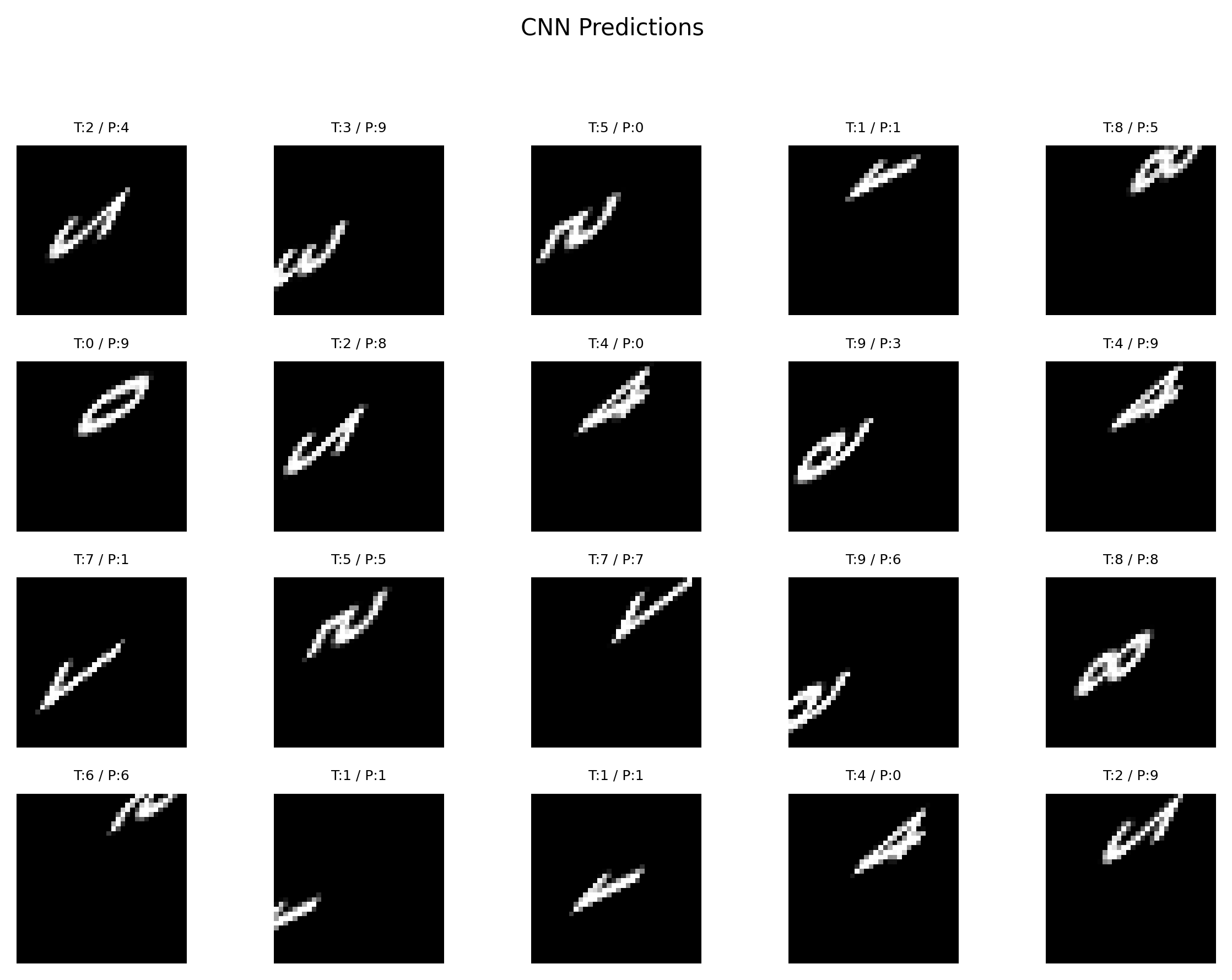}
				\caption{Standard CNN}
				\label{fig:standard_cnn}
			\end{subfigure}
			\caption{Prediction comparison under affine transformation \(A = \begin{bmatrix} 1 & 2 \\ 2 & 1 \end{bmatrix}\). G-CNN outperforms CNN with higher mean accuracy (0.6950 vs. 0.3150).}
			\label{fig:qualitative_comparison}
		\end{figure}


\begin{figure}[htbp]
	\centering
	\begin{subfigure}[b]{0.47\textwidth}
		\centering
		\includegraphics[width=\linewidth]{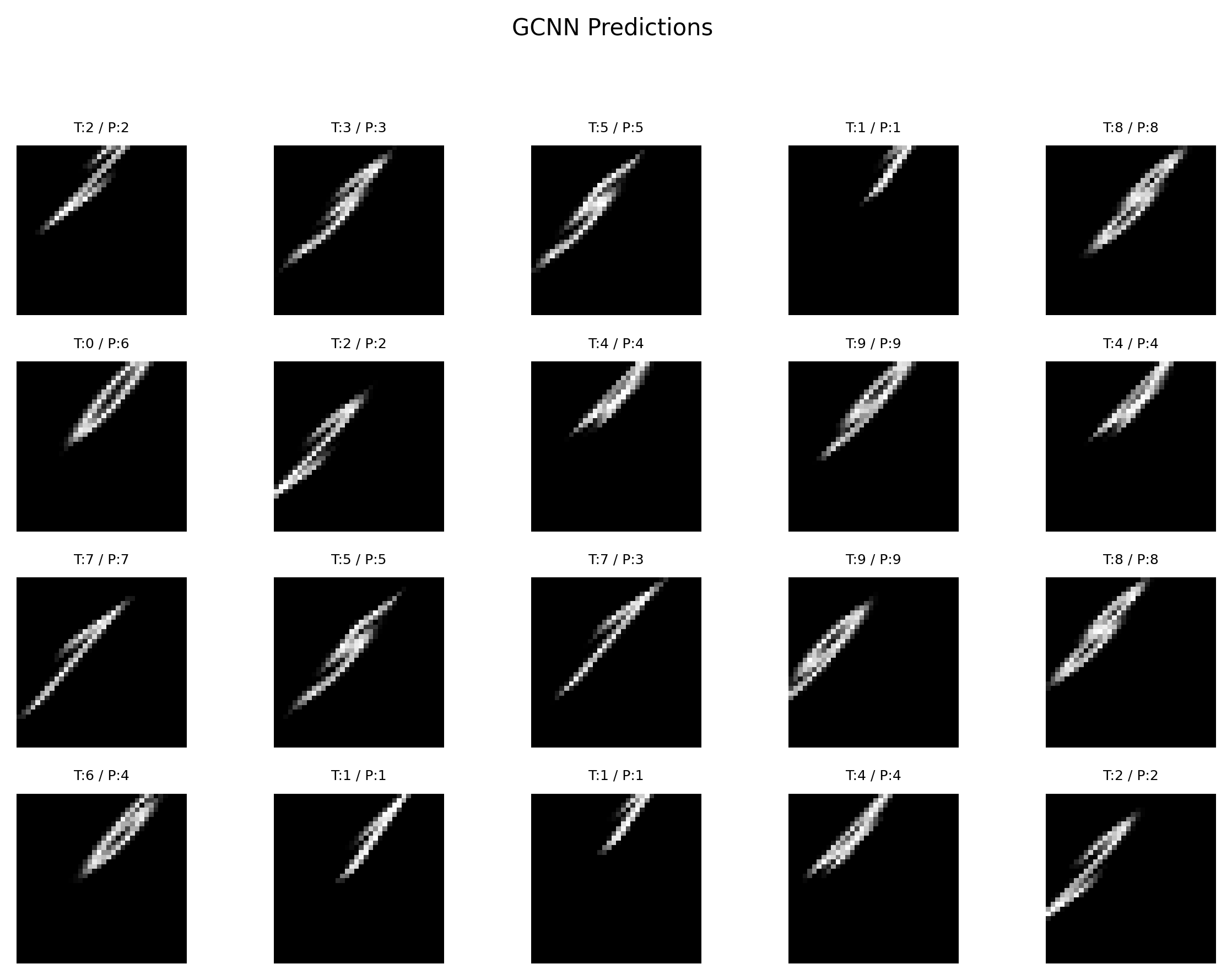}
		\caption{Group Convolution Network}
		\label{fig:group_conv}
	\end{subfigure}
	\hfill
	\begin{subfigure}[b]{0.47\textwidth}
		\centering
		\includegraphics[width=\linewidth]{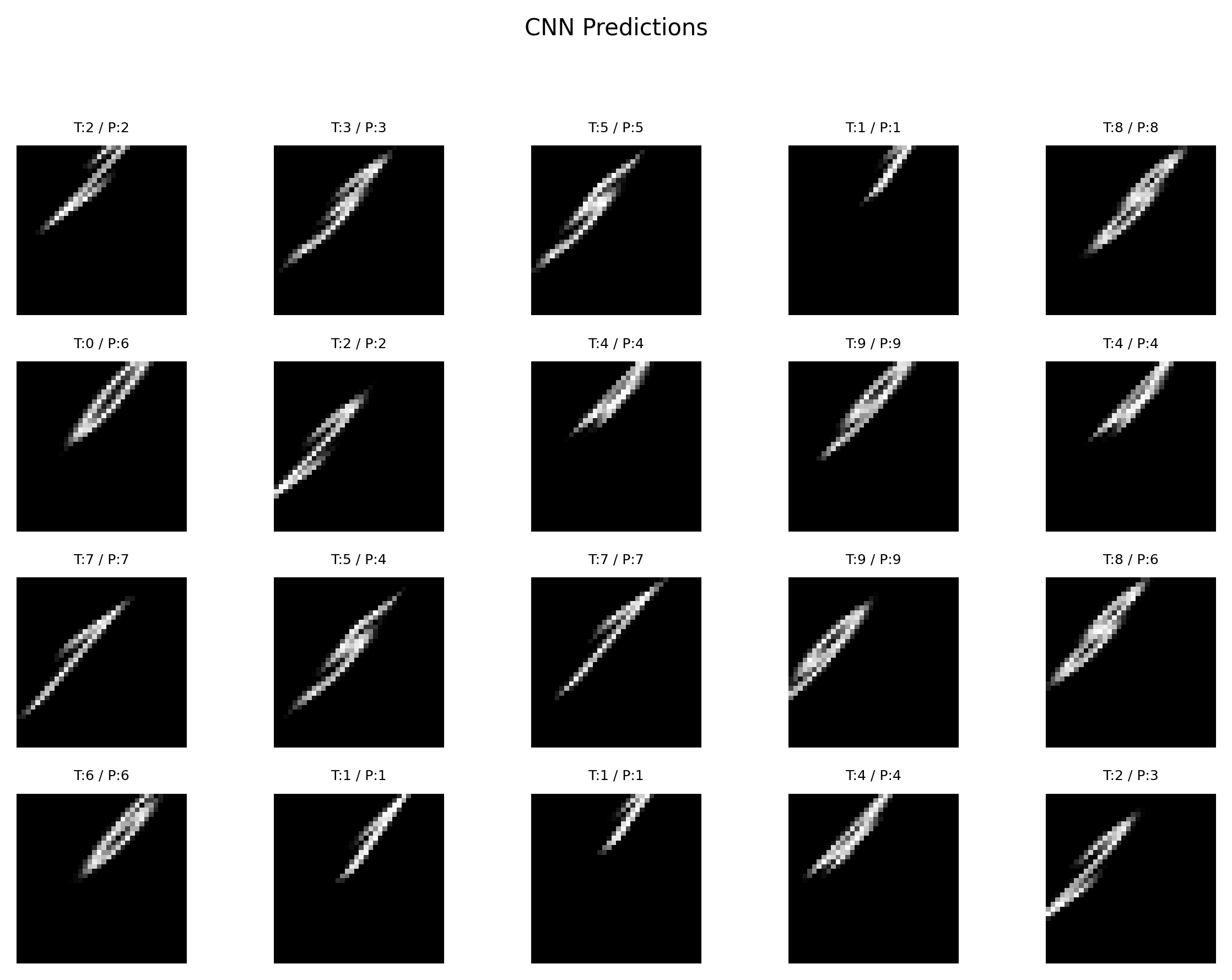}
		\caption{Standard CNN}
		\label{fig:standard_cnn}
	\end{subfigure}
	\caption{Prediction comparison under affine transformation \( A = \begin{bmatrix} 1 & 0.7 \\ 0.7 & 1 \end{bmatrix} \). GCNN outperforms CNN with higher mean accuracy (0.8000 vs. 0.7200).}
	\label{fig:affine_eval2}
\end{figure}

\begin{figure}[htbp]
	\centering
	\begin{subfigure}[b]{0.47\textwidth}
		\centering
		\includegraphics[width=\linewidth]{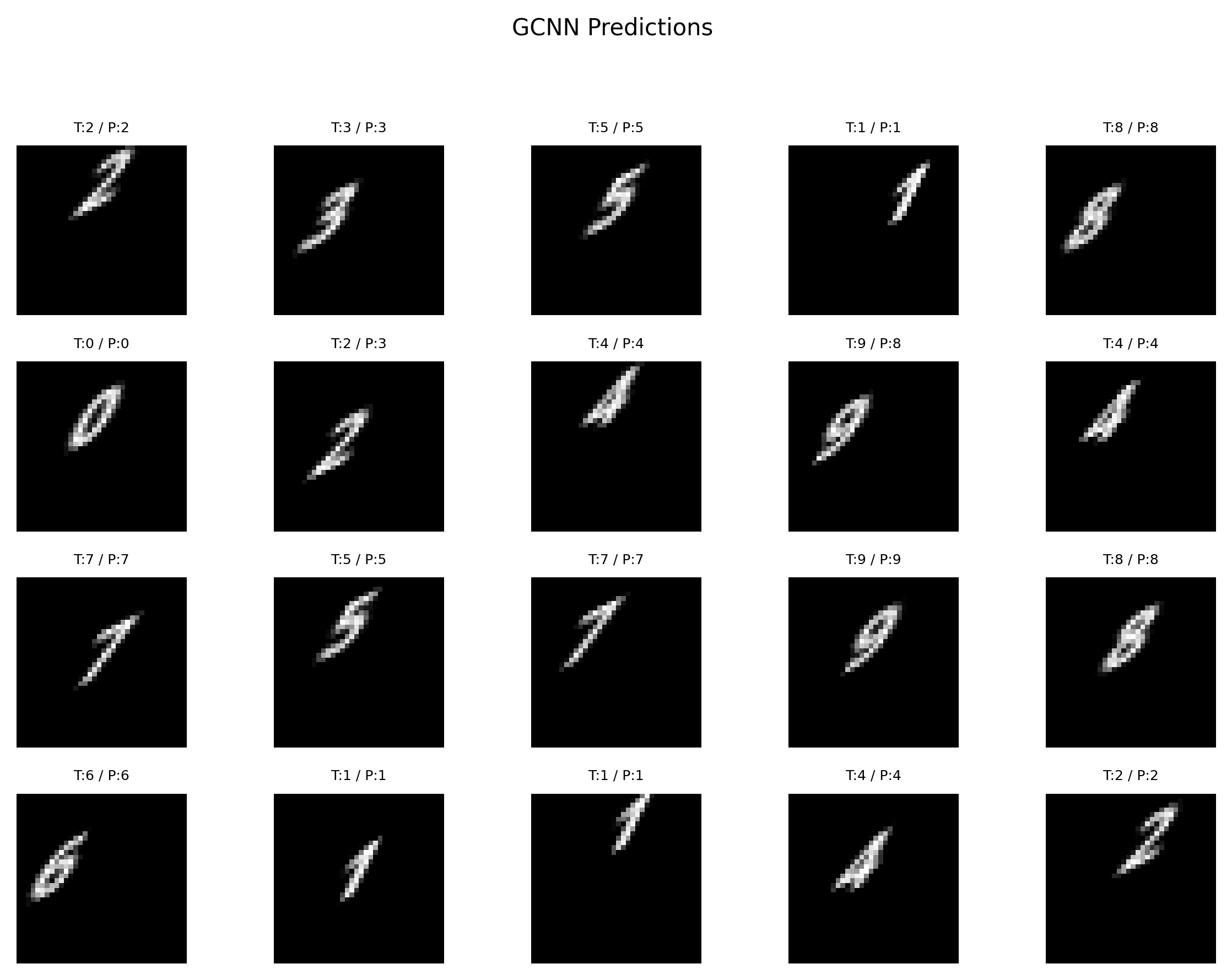}
		\caption{Group Convolution Network}
		\label{fig:group_conv}
	\end{subfigure}
	\hfill
	\begin{subfigure}[b]{0.47\textwidth}
		\centering
		\includegraphics[width=\linewidth]{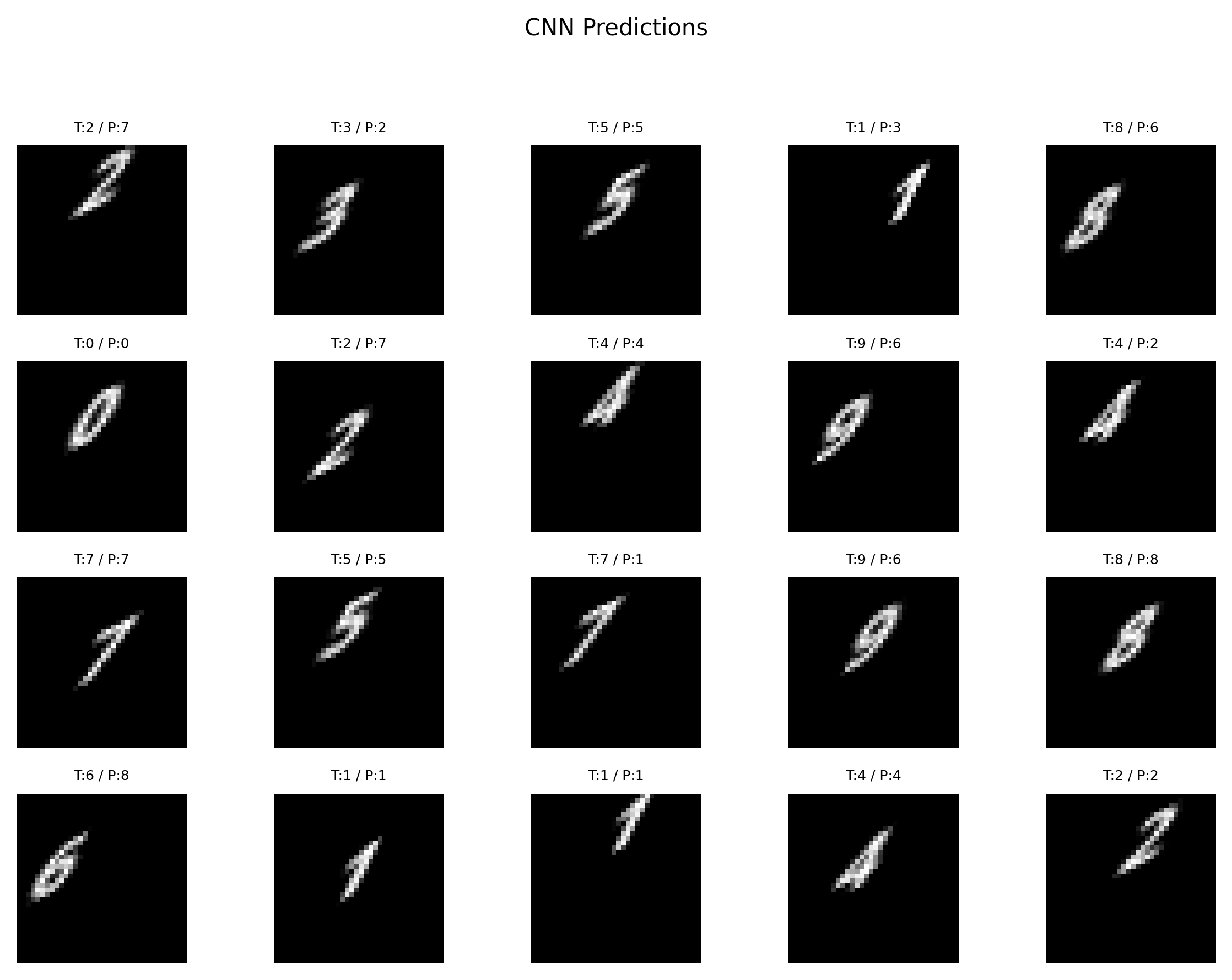}
		\caption{Standard CNN}
		\label{fig:standard_cnn}
	\end{subfigure}
	\caption{Prediction comparison under affine transformation \( A = \begin{bmatrix} 1 & 0.5 \\ 0.5 & 1 \end{bmatrix} \). GCNN outperforms CNN with higher mean accuracy (0.8150 vs. 0.5950).}
\label{fig:affine_eval2}
\end{figure}


\section{Conclusion}

In this work, we have demonstrated that Group Convolutional Neural Networks (G-CNNs) exhibit stability under affine transformations arising from the full general linear group $\mathrm{GL}_2(\mathbb{R})$. This includes a wide range of transformations such as rotations, scalings, and translations, significantly broadening the class of transformations for which G-CNNs are theoretically justified. Our findings represent the first rigorous validation of G-CNN architectures in such a general affine setting.
Furthermore, we showed that the complex group convolutions required for these networks can be simplified to standard integrals over $\mathbb{R}^2$, enhancing computational feasibility. These theoretical insights support the design of more robust and invariant learning models.

Beyond theoretical contributions, our simulations confirm that G-CNNs can outperform conventional CNNs, particularly in data-scarce scenarios. This highlights the practical benefits of incorporating affine invariance into deep learning architectures, potentially expanding their applicability to more complex and realistic pattern recognition tasks where geometric variability is prevalent.

		

\FloatBarrier

\bibliography{sn-bibliography}

		\clearpage
		
		\section{Appendix}
		In this section, we initially present an example analogous to Example \ref{equivalent_inputs} but within the context of $GL(2)$. Then we provide proof of main theorems in the paper.

		\begin{example}
			Consider the below input function:
			\begin{equation*}
				f(\boldsymbol{x})=\begin{cases}
					1 & \text{if $\boldsymbol{x} \in [t_1,t_2] \times [s_1,s_2]$} \\
					0 & \text{otherwise}
				\end{cases},
			\end{equation*} 
			then according to Example (\ref{LiftingExample}) for the lifting of $f$ we have 
			\begin{equation*}
				(\mathcal{K} f)[\boldsymbol{x},\boldsymbol{A}]=\begin{cases}
					\frac{1}{|\det(\boldsymbol{A})|} & \text{if $\boldsymbol{x} \in [t_1,t_2] \times [s_1,s_2]$} \\
					0 & \text{otherwise}
				\end{cases},
			\end{equation*}
			
			as a result
			
			\begin{equation*} 
				\begin{split}
					&\int_{G_2} f([\boldsymbol{x},\boldsymbol{A}])k([\boldsymbol{y},\boldsymbol{B}]^{-1}[\boldsymbol{x},\boldsymbol{A}]) d \mu_{G_2} \\
					&= \int_{G_2} f([\boldsymbol{x},\boldsymbol{A}])k\bigl(\boldsymbol{B}^{-1}\boldsymbol{x}-\boldsymbol{B}^{-1}\boldsymbol{y},\boldsymbol{A}\boldsymbol{B}^{-1}\bigr)d\mu_{G_2} \\
					& = \int_{G_2} \frac{1}{|\det(\boldsymbol{A})|} k\bigl(\boldsymbol{B}^{-1}\boldsymbol{x}-\boldsymbol{B}^{-1}\boldsymbol{y},\boldsymbol{A}\boldsymbol{B}^{-1}\bigr)d\mu_{G_2} \\
					&=\int_{t_1}^{t_2} \int_{s_1}^{s_2}  \int_{\mathrm{GL}_2} \frac{1}{|\det(\boldsymbol{A})|^2}  k(\boldsymbol{B}^{-1}\boldsymbol{x}-\boldsymbol{B}^{-1}\boldsymbol{y},\boldsymbol{A}\boldsymbol{B}^{-1}) d \mu_{\mathrm{GL}_2} dx_1 dx_2,
				\end{split}
			\end{equation*}
			We define the separable kernel as follows
			\begin{equation*}
				k( \boldsymbol{x}, \boldsymbol{P}) = |\det(\boldsymbol{P})|^2 \exp(\langle \boldsymbol{x},\boldsymbol{l}\rangle) \mathcal{N}(\boldsymbol{P}),
			\end{equation*}
			where $\boldsymbol{l} = [l_1,l_2]$, and $\mathcal{N}(\boldsymbol{P})$ is the normalized heat kernel of $GL(2)$. By this choice we obtain

			\begin{equation*} 
				\begin{split}
					&\int_{t_1}^{t_2}\int_{s_1}^{s_2} \int_{\mathrm{GL}_2} \frac{1}{|\det(\boldsymbol{A})|^2}  k(\boldsymbol{B}^{-1}\boldsymbol{x}-\boldsymbol{B}^{-1}\boldsymbol{y},\boldsymbol{A}\boldsymbol{B}^{-1}) dx_1 dx_2  d \mu_{\mathrm{GL}_2} \\
					&= \int_{t_1}^{t_2}\int_{s_1}^{s_2} \int_{\mathrm{GL}_2} \frac{1}{|\det(\boldsymbol{A})|^2} {|\det(\boldsymbol{B^{-1}})|^2}  |\det(\boldsymbol{A})|^2 \exp(\langle \boldsymbol{B}^{-1}\boldsymbol{x}-\boldsymbol{B}^{-1}\boldsymbol{y},\boldsymbol{l}\rangle) \mathcal{N}(\boldsymbol{P})  dx_1 dx_2  d \mu_{\mathrm{GL}_2} \\
					& = \int_{t_1}^{t_2}\int_{s_1}^{s_2} \frac{1}{|\det(\boldsymbol{B})|^2}  \exp(\langle \boldsymbol{B}^{-1}\boldsymbol{x}-\boldsymbol{B}^{-1}\boldsymbol{y},\boldsymbol{l}\rangle) \int_{\mathrm{GL}_2}  \mathcal{N}(\boldsymbol{P})    d \mu_{\mathrm{GL}_2} dx_1 dx_2 \\
					& = \frac{1}{|\det(\boldsymbol{B})|^2 \exp(\langle \boldsymbol{B}^{-1}\boldsymbol{y},\boldsymbol{l}\rangle)}\int_{t_1}^{t_2}\int_{s_1}^{s_2}   \exp(\langle \boldsymbol{B}^{-1}\boldsymbol{x},\boldsymbol{l}\rangle)  dx_1 dx_2 \\
				\end{split}.
			\end{equation*}
			
			Assume that $\boldsymbol{B}^{-1} = \begin{pmatrix}
				\beta_1 & \beta_2 \\
				\beta_3 & \beta_4
			\end{pmatrix}$, then we obtain
			
			\begin{equation*} 
				\begin{split}
					& \frac{1}{|\det(\boldsymbol{B})|^2 \exp(\langle \boldsymbol{B}^{-1}\boldsymbol{y},\boldsymbol{l}\rangle)}\int_{t_1}^{t_2}\int_{s_1}^{s_2}   \exp(\langle \boldsymbol{B}^{-1}\boldsymbol{x},\boldsymbol{l}\rangle)  dx_1 dx_2 \\
					& =\frac{1}{|\det(\boldsymbol{B})|^2 \exp(\langle \boldsymbol{B}^{-1}\boldsymbol{y},\boldsymbol{l}\rangle)}\int_{t_1}^{t_2}\int_{s_1}^{s_2}   \exp(\langle \boldsymbol{B}^{-1}\boldsymbol{x},\boldsymbol{l}\rangle)  dx_1 dx_2 \\
					&=\frac{1}{|\det(\boldsymbol{B})|^2 \exp(\langle \boldsymbol{B}^{-1}\boldsymbol{y},\boldsymbol{l}\rangle)}\int_{t_1}^{t_2}\int_{s_1}^{s_2}   \exp(l_1\beta_1x_1+l_1\beta_2x_2+l_2\beta_3x_1+l_2\beta_4x_2)  dx_1 dx_2 \\  
					&=\frac{1}{|\det(\boldsymbol{B})|^2 \exp(\langle \boldsymbol{B}^{-1}\boldsymbol{y},\boldsymbol{l}\rangle)}\int_{t_1}^{t_2}\int_{s_1}^{s_2}   \exp\Bigl(x_1(l_1\beta_1+l_2\beta_3)+x_2(l_1\beta_2+l_2\beta_4)\Bigr)  dx_1 dx_2 \\ 
					&=\frac{1}{|\det(\boldsymbol{B})|^2 \exp(\langle \boldsymbol{B}^{-1}\boldsymbol{y},\boldsymbol{l}\rangle)(l_1\beta_1+l_2\beta_3)(l_1\beta_2+l_2\beta_4)}\Bigl(\exp\Bigl(t_2(l_1\beta_1+l_2\beta_3)-\exp\Bigl(t_1(l_1\beta_1+l_2\beta_3)\Bigr)\\
					&\times\Bigl(\exp\Bigl(s_2(l_1\beta_2+l_2\beta_4)\Bigr)- \exp\Bigl(s_1(l_1\beta_2+l_2\beta_4)\Bigr) \Bigr).
				\end{split}
			\end{equation*}
			
			For simplicity we can assume $l_1 = l_2 =1$, then we have
			
			\begin{equation*} 
				\begin{split}
					& \frac{1}{|\det(\boldsymbol{B})|^2 \exp(\langle \boldsymbol{B}^{-1}\boldsymbol{y},\boldsymbol{l}\rangle)}\int_{t_1}^{t_2}\int_{s_1}^{s_2}   \exp(\langle \boldsymbol{B}^{-1}\boldsymbol{x},\boldsymbol{l}\rangle)  dx_1 dx_2 \\
					&=\frac{1}{|\det(\boldsymbol{B})|^2 \exp(\langle \boldsymbol{B}^{-1}\boldsymbol{y},[1,1]\rangle)(\beta_1+\beta_3)(\beta_2+\beta_4)}\Bigl(\exp\Bigl(t_2(\beta_1+\beta_3)-\exp\Bigl(t_1(\beta_1+\beta_3)\Bigr)\\
					&\times\Bigl(\exp\Bigl(s_2(\beta_2+\beta_4)\Bigr)- \exp\Bigl(s_1(\beta_2+\beta_4)\Bigr) \Bigr).
				\end{split}
			\end{equation*}
			
		\end{example}
		
		\subsection*{Proof of Theorem \ref{1stlayerinvariance}}
		
		\begin{proof}
			We know that $\int_{\R^2} \frac{k(g^{-1}\boldsymbol{x})f(\boldsymbol{x})}{|\det h|} d\boldsymbol{x} = \int_{\R^2} k(\boldsymbol{x}) f(g\boldsymbol{x})d\boldsymbol{x}$, then we have

			\begin{equation*}
				\begin{aligned} & \sup _{g^{\prime}}\bigl|\left( (\mathcal{K}f_1) -\rho\left(g^{-1}\right) (\mathcal{K}f_1)\right)\left(g^{\prime}\right)\bigr|  \\
					& =\sup _{g^{\prime}}\biggl| \int_{\R^2}k(\boldsymbol{x})f_1(g'\boldsymbol{x}) d\boldsymbol{x} -k(\boldsymbol{x})f_1(gg'\boldsymbol{x}) d\boldsymbol{x}\biggr| \\
					&\le \sup _{g^{\prime}} \int_{\R^2} |k(\boldsymbol{x})| |f_1(g'\boldsymbol{x})-f_1(gg'\boldsymbol{x})|,
				\end{aligned}
			\end{equation*}
			by setting $g'\boldsymbol{x}=\boldsymbol{y}$ for the last term in above we have
			\begin{equation*}
				\sup _{g^{\prime}} \int_{\R^2} |k(\boldsymbol{x})| |f_1(g'\boldsymbol{x})-f_1(gg'\boldsymbol{x})| 
				\le \epsilon \int_{\R^2} |k(\boldsymbol{x})| d\boldsymbol{x} = \epsilon \|k\|_1^{\R^2}.
			\end{equation*}
			
		\end{proof}

		
		\subsection*{Proof of Theorem \ref{2ndlayerinvariance}}
		
		\begin{proof}
			We have 
			\begin{equation*}
				\begin{aligned}
					\| (\mathcal{K}f_1)*k - \rho(\tilde{h}) (\mathcal{K}f_2)*k\|_{\sup}^{G_2} &= \sup \Bigr|\int_{G_2} (\mathcal{K}f_1)(g) k(h^{-1}(g)) - \rho(\tilde{h}) (\mathcal{K}f_2)(g) k(h^{-1}g) d\mu_{G_2}(g)\Bigl|  \\
					&\le \sup \int_{G_2} \Bigr| (\mathcal{K}f_1)(g) k(h^{-1}(g)) - \rho(\tilde{h}) (\mathcal{K}f_2)(g) k(h^{-1}g) \Bigl| d\mu_{G_2}(g)  \\
					&\le \sup \int_{G_2} \Bigr| (\mathcal{K}f_1)(g)  - \rho(\tilde{h}) (\mathcal{K}f_2)(g)  \Bigl| \Bigr|k(h^{-1}(g))\Bigl| d\mu_{G_2}(g) \\
					&\le \epsilon \|k\|_1^{G_2}.
				\end{aligned}   
			\end{equation*}
			The second part of the theorem
			results by selecting $k=\delta(g-h')$.
		\end{proof}

		
		\subsection*{Proof of Theorem \ref{3rdlayerinvariance}}
		
		\begin{proof}
			We know that 
			\begin{equation*}
				\begin{aligned}
					&\Bigr|\int_{G_2} \bigr((\mathcal{K}f_1)*k -(\mathcal{K}f_2)*k\bigl)(h) d\mu_{G_2}(h) \Bigl|=\\&\Bigr|\int_{G_2}\int_{G_2} \bigl((\mathcal{K}f_1)(g)k(h^{-1}g) \bigr) d\mu_{G_2}(g) d\mu_{G_2}(h) - \int_{G_2}\int_{G_2} \bigl((\mathcal{K}f_2)(g)k(h^{-1}g) \bigr) d\mu_{G_2}(g) d\mu_{G_2}(h) \Bigl|.
				\end{aligned}
			\end{equation*}
			Then for the second term in the above equation we have and replacing $g$ with $\tilde{h}^{-1}g$ we have 
			
			\begin{equation*}
				\begin{aligned}
					&\int_{G_2}\int_{G_2} \bigl((\mathcal{K}f_2)(g)k(h^{-1}g) \bigr) d\mu_{G_2}(g) d\mu_{G_2}(h)    \\
					&= \int_{G_2}\int_{G_2} \bigl((\mathcal{K}f_2)(\tilde{h}^{-1}g)k(h^{-1}\tilde{h}^{-1}g) \bigr) d\mu_{G_2}(g) d\mu_{G_2}(h) \\
					&= \int_{G_2}\int_{G_2} \bigl((\mathcal{K}f_2)(\tilde{h}^{-1}g)k((\tilde{h}h)^{-1}g) \bigr) d\mu_{G_2}(g) d\mu_{G_2}(h),
				\end{aligned}
			\end{equation*}
			if we set 
			\begin{equation*}
				f(h)= \int_{G_2} \bigl((\mathcal{K}f_2)(\tilde{h}^{-1}g)k((\tilde{h}h)^{-1}g) \bigr) d\mu_{G_2}(g),
			\end{equation*}
			then
			\begin{equation*}
				\begin{aligned}
					&\int_{G_2}\int_{G_2} \bigl((\mathcal{K}f_2)(\tilde{h}^{-1}g)k((\tilde{h}h)^{-1}g) \bigr) d\mu_{G_2}(g) d\mu_{G_2}(h) \\
					&=\int_{G_2} f(h) d\mu_{G_2}(h)= \int_{G_2} f(\tilde{h}h) d\mu_{G_2}(h)   \\
					&= \int_{G_2}\int_{G_2} \bigl((\mathcal{K}f_2)(\tilde{h}^{-1}g)k(h^{-1}g) \bigr) d\mu_{G_2}(g) d\mu_{G_2}(h),
				\end{aligned}
			\end{equation*}
			therefore, 
			
			\begin{equation*}
				\begin{aligned}
					&\Bigr|\int_{G_2}\int_{G_2} \bigl((\mathcal{K}f_1)(g)k(h^{-1}g) \bigr) d\mu_{G_2}(g) d\mu_{G_2}(h) - \int_{G_2}\int_{G_2} \bigl((\mathcal{K}f_2)(g)k(h^{-1}g) \bigr) d\mu_{G_2}(g) d\mu_{G_2}(h) \Bigl| =\\
					&\Bigr|\int_{G_2}\int_{G_2} \bigl((\mathcal{K}f_1)(g)k(h^{-1}g) \bigr) d\mu_{G_2}(g) d\mu_{G_2}(h) - \int_{G_2}\int_{G_2} \bigl((\mathcal{K}f_2)(\tilde{h}^{-1}g)k(h^{-1}g) \bigr) d\mu_{G_2}(g) d\mu_{G_2}(h) \Bigl| =\\
					& \Bigr|\int_{G_2}\int_{G_2} \bigl((\mathcal{K}f_1)(g)-(\mathcal{K}f_2)(\tilde{h}^{-1}g)\bigr) k(h^{-1}g) d\mu_{G_2}(g) d\mu_{G_2}(h)  \Bigl| = \Bigr|\int_{G_2} \bigl((\mathcal{K}f_1)-(\mathcal{K}f_2)\circ \tilde{h}^{-1}\bigr)* k \,d\mu_{G_2}(h)  \Bigl|\\
					& \le \int_{G_2} \Bigr|\bigl((\mathcal{K}f_1)-(\mathcal{K}f_2)\circ \tilde{h}^{-1}\bigr)* k \,\Bigl| d\mu_{G_2}(h) = \Bigl\|\bigl((\mathcal{K}f_1)-(\mathcal{K}f_2)\circ \tilde{h}^{-1}\bigr)* k \,\Bigr\|_1^{G_2} \le \epsilon \|k\|_1^{G_2}.
				\end{aligned}
			\end{equation*}

		\end{proof}
		
		\subsection*{Proof of Theorem \ref{G2integration}}
		\begin{proof}
			We know that
			\begin{equation*} 
				\begin{split}
					&\int_{G_2} f([\boldsymbol{x},\boldsymbol{A}])k([\boldsymbol{y},\boldsymbol{B}]^{-1}[\boldsymbol{x},\boldsymbol{A}]) d \mu_{G_2} \\
					&= \int_{G_2} f([\boldsymbol{x},\boldsymbol{A}])k\bigl(\boldsymbol{B}^{-1}\boldsymbol{x}-\boldsymbol{B}^{-1}\boldsymbol{y},\boldsymbol{A}\boldsymbol{B}^{-1}\bigr)d\mu_{G_2}.
				\end{split}
			\end{equation*}
			Employing (\ref{Eq:GnDecomposition})
			we have
			
			\begin{equation*} 
				\begin{split}
					&\int_{G_2} f([\boldsymbol{x},\boldsymbol{A}])k(\boldsymbol{B}^{-1}\boldsymbol{x}-\boldsymbol{B}^{-1}\boldsymbol{y},\boldsymbol{A}\boldsymbol{B}^{-1})d\mu_{G_2} \\
					&=\int_{\mathrm{GL}_2} \int_{\mathbb{R}^2} f[\boldsymbol{x}, \boldsymbol{A}]k(\boldsymbol{B}^{-1}\boldsymbol{x}-\boldsymbol{B}^{-1}\boldsymbol{y},\boldsymbol{A}\boldsymbol{B}^{-1}) \frac{dx_1 dx_2 }{|\operatorname{det}(\boldsymbol{A})|} d \mu_{\mathrm{GL}_2},        
				\end{split}
			\end{equation*}
			we also set 
			
			\begin{equation}\label{H_A}
				H_{f,k}(\boldsymbol{A},\boldsymbol{B},\boldsymbol{y}) :=  \int_{\mathbb{R}^2} f[\boldsymbol{x}, \boldsymbol{A}]k(\boldsymbol{B}^{-1}\boldsymbol{x}-\boldsymbol{B}^{-1}\boldsymbol{y},\boldsymbol{AB}^{-1}) \frac{dx_1 dx_2 }{|\operatorname{det}(\boldsymbol{A})|}.  
			\end{equation}
			From separability property of kernel we have $k(\boldsymbol{x},\boldsymbol{A}) = k_1(\boldsymbol{x})k_2(\boldsymbol{A})$. As a result
			\begin{equation}\label{H_A}
				\begin{split}
					H_{f,k}(\boldsymbol{A},\boldsymbol{B},\boldsymbol{y}) &=  \int_{\mathbb{R}^2} f[\boldsymbol{x}, \boldsymbol{A}]k_1(\boldsymbol{B}^{-1}\boldsymbol{x}-\boldsymbol{B}^{-1}\boldsymbol{y})k_2(\boldsymbol{A}\boldsymbol{B}^{-1}) \frac{dx_1 dx_2 }{|\operatorname{det}(\boldsymbol{A})|} \\
					& = \frac{k_2(\boldsymbol{A}\boldsymbol{B}^{-1}) }{{|\operatorname{det}(\boldsymbol{A})|}} \int_{\mathbb{R}^2} f[\boldsymbol{x}, A]k_1(\boldsymbol{B}^{-1}\boldsymbol{x}-\boldsymbol{B}^{-1}\boldsymbol{y})dx_1 dx_2 \\
					& = \frac{k_2(\boldsymbol{A}\boldsymbol{B}^{-1}) }{{|\operatorname{det}(\boldsymbol{A})|}} \Bigl(f*(k_1 \circ \boldsymbol{B}^{-1})\Bigr) \\
					& = \frac{k_2(\boldsymbol{A}\boldsymbol{B}^{-1}) }{{|\operatorname{det}(\boldsymbol{A})|}} \mathcal{F}^{-1}\Bigl(\mathcal{F}(f) \mathcal{F}(k_1 \circ \boldsymbol{B}^{-1})\Bigr),
				\end{split}     
			\end{equation}
			where $\mathcal{F}(\cdot)$  denotes the Fourier transform. The next step is to find an explicit form for the  Fourier transform. We can apply the result from \citep{bracewell1993affine}. Assume that $\mathcal{F}(k_1)= K_1(\boldsymbol{u})$ and $\mathcal{F}(f)= F(\boldsymbol{u})$ then we have
			
			\begin{equation*}
				H_{f,k}(\boldsymbol{A},\boldsymbol{B},\boldsymbol{y})  =   \frac{k_2(\boldsymbol{A}\boldsymbol{B}^{-1}) }{{|\operatorname{det}(\boldsymbol{A})||\operatorname{det}(\boldsymbol{B}^{-1})|}} \mathcal{F}^{-1}\Bigl(F(\boldsymbol{u}) k_1(\boldsymbol{B}^{\top}\boldsymbol{u}  ) \Bigr).
			\end{equation*}
			Now we use decomposition of $\mathrm{GL}_2(\R)$ as $K_0 \ltimes H(1,0)$ in \citep{schindler1993iwasawa,milad2023harmonic}.

			

			
			

			\begin{proposition}
				[Proposition 5.1 of \citep{milad2023harmonic}] If $\boldsymbol{A}=\left(\begin{array}{ll}a & b \\ c & d\end{array}\right) \in \mathrm{GL}_2(\mathbb{R})$, then $\boldsymbol{A}$ can be uniquely decomposed as the product $\boldsymbol{A=M_A C_A}$ with $\boldsymbol{M_A} \in K_0$ and $\boldsymbol{C_A} \in H_{(1,0)}$. In fact
				$$
				\boldsymbol{M_A}=\left(\begin{array}{cc}
					s & -t \\
					t & s
				\end{array}\right), \text { with } \ s=\frac{d(a d-b c)}{b^2+d^2}, t=\frac{-b(a d-b c)}{b^2+d^2},
				$$
				and
				$$
				\boldsymbol{C_A}=\left(\begin{array}{cc}
					1 & 0 \\
					u & v
				\end{array}\right), \text { with } \ u=\frac{c d+a b}{(a d-b c)}, v=\frac{b^2+d^2}{(a d-b c)}.
				$$
				This factorization leads to a parallel factorization of $G_2$.     
			\end{proposition}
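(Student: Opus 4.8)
The plan is to establish the decomposition by directly solving the matrix equation $\boldsymbol{M}_A\boldsymbol{C}_A = \boldsymbol{A}$, reading off the formulas for $s,t,u,v$, and then deducing uniqueness from the already-recorded fact $K_0 \cap H_{(1,0)} = \{\boldsymbol{I}\}$.

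First I would write out the product in $K_0\cdot H_{(1,0)}$,
\begin{equation*}
\begin{pmatrix} s & -t \\ t & s \end{pmatrix}\begin{pmatrix} 1 & 0 \\ u & v \end{pmatrix} = \begin{pmatrix} s-tu & -tv \\ t+su & sv \end{pmatrix},
\end{equation*}
set it equal to $\begin{pmatrix} a & b \\ c & d\end{pmatrix}$, and record the four scalar equations $a = s-tu$, $b = -tv$, $c = t+su$, $d = sv$. Taking determinants gives $(s^2+t^2)\,v = ad-bc =: \Delta$, which is nonzero since $\boldsymbol{A}$ is invertible; hence any solution has $v\neq 0$ and $s^2+t^2\neq 0$, so $\boldsymbol{C}_A\in H_{(1,0)}$ and $\boldsymbol{M}_A\in K_0$ come for free. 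From $b=-tv$ and $d=sv$ one gets $b^2+d^2 = (s^2+t^2)v^2 = \Delta v$; the one genuinely substantive observation is that $b^2+d^2>0$, because $(b,d)^\top$ is the second column of the invertible matrix $\boldsymbol{A}$ and therefore nonzero. Thus $v = (b^2+d^2)/\Delta$, and then $s = d/v = d\Delta/(b^2+d^2)$, $t=-b/v=-b\Delta/(b^2+d^2)$; a quick check gives $s^2+t^2 = \Delta^2/(b^2+d^2)>0$, as needed. Finally, eliminating $u$ from $a=s-tu$ and $c=t+su$ (multiply the first by $-t$, the second by $s$, add) yields $sc-ta=(s^2+t^2)u$, whence $u = (sc-ta)/(s^2+t^2) = (cd+ab)/\Delta$. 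Substituting these four values back into the system verifies all equations, so existence holds and the formulas are exactly those claimed.

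For uniqueness, if $\boldsymbol{M}_1\boldsymbol{C}_1 = \boldsymbol{M}_2\boldsymbol{C}_2$ with $\boldsymbol{M}_i\in K_0$ and $\boldsymbol{C}_i\in H_{(1,0)}$, then $\boldsymbol{M}_2^{-1}\boldsymbol{M}_1 = \boldsymbol{C}_2\boldsymbol{C}_1^{-1}\in K_0\cap H_{(1,0)} = \{\boldsymbol{I}\}$ (using that $K_0$ and $H_{(1,0)}$ are subgroups), so $\boldsymbol{M}_1=\boldsymbol{M}_2$ and $\boldsymbol{C}_1=\boldsymbol{C}_2$; equivalently, the displayed system pins down $(s,t,u,v)$ uniquely once $b^2+d^2>0$ is in hand. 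The induced (``parallel'') factorization of $G_2$ then follows by carrying along the translation: since $[\boldsymbol{x},\boldsymbol{A}] = [\boldsymbol{x},\boldsymbol{M}_A]\,[\boldsymbol{0},\boldsymbol{C}_A]$ in $G_2 = \R^2\rtimes\mathrm{GL}_2(\R)$, every element of $G_2$ splits through the subgroups $\R^2\rtimes K_0$ and $\{\boldsymbol{0}\}\rtimes H_{(1,0)}$.

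Since the argument is elementary linear algebra, there is no real obstacle beyond the bookkeeping; the point to watch is precisely the non-vanishing of $b^2+d^2$, which is what makes the closed forms well-defined and must be drawn from invertibility of $\boldsymbol{A}$. As a conceptual cross-check I would note the complex-number picture: identifying $\begin{pmatrix} s & -t \\ t & s\end{pmatrix}\leftrightarrow s+it$ realizes $K_0$ as $\mathbb{C}^\ast$, turns the second-column relation into $b+id = iv(s+it)$, and makes the formulas for $s,t,v$ — and the uniqueness, which is just the polar-type splitting of the second column — immediate.
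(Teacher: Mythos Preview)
Your argument is correct: the matrix product is computed correctly, the four scalar equations are solved in the right order, the key nondegeneracy $b^2+d^2>0$ is justified via invertibility of $\boldsymbol{A}$, and the uniqueness step via $K_0\cap H_{(1,0)}=\{\boldsymbol{I}\}$ is clean. The derived formulas for $s,t,u,v$ match the statement exactly.

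As for comparison: the paper does not actually prove this proposition. It is quoted verbatim as Proposition~5.1 of \citep{milad2023harmonic} and invoked inside the proof of Theorem~\ref{G2integration} without further argument. So there is no in-paper proof to compare against; your direct elimination/verification supplies what the paper delegates to the reference. The complex-number remark at the end (identifying $K_0\cong\mathbb{C}^\ast$) is a nice conceptual gloss that neither the paper nor, presumably, a bare citation would include.

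One small caution on the ``parallel factorization of $G_2$'' sentence: the paper's multiplication convention for $G_2$ is not entirely consistent (compare the translation and matrix parts in the displayed computation of $[\boldsymbol{y},\boldsymbol{B}]^{-1}[\boldsymbol{x},\boldsymbol{A}]$ in the proof of Theorem~\ref{G2integration}), so the specific identity $[\boldsymbol{x},\boldsymbol{A}]=[\boldsymbol{x},\boldsymbol{M}_A]\,[\boldsymbol{0},\boldsymbol{C}_A]$ you wrote is correct under the natural convention $[\boldsymbol{a},\boldsymbol{P}][\boldsymbol{b},\boldsymbol{Q}]=[\boldsymbol{a}+\boldsymbol{P}\boldsymbol{b},\boldsymbol{P}\boldsymbol{Q}]$, but you may want to flag the convention explicitly if you use it elsewhere.
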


			

			
			
			
			
			
			Consider the one to one transform between $H$ and $H^*$ so that $H^*(s,t,u,v,\boldsymbol{B},\boldsymbol{y}) := H_{f,k}(a,b,c,d,\boldsymbol{B},\boldsymbol{y})$, where $a = s-ut$, $c =t+us $, $b =-t/v $, and $d = s/v $. Employing the above proposition and Equation (\ref{GL_Decomposition}) we can write 
			
			
			
			\begin{equation*}
				\begin{aligned}
					H'(\boldsymbol{B},\boldsymbol{y})  &=\int_{\mathrm{GL}_2} H_{f,k}(\boldsymbol{A},\boldsymbol{B},\boldsymbol{y}) d \mu_{\mathrm{GL}_2} \\
					& = \int_{\mathrm{GL}_2} H^*(s(a,b,c,d),t(a,b,c,d),u(a,b,c,d),v(a,b,c,d),\boldsymbol{B},\boldsymbol{y}) d \mu_{\mathrm{GL}_2}.
				\end{aligned}
			\end{equation*}
			Therefore, we obtain
			
			\begin{equation*}
				\int_{\mathrm{GL}_2} H_{f,k}(\boldsymbol{A},\boldsymbol{B},\boldsymbol{y}) d \mu_{\mathrm{GL}_2} = \int_{K_0} \int_{H_{(1,0)}} H^*(s,t,u,v,\boldsymbol{B},\boldsymbol{y})|v| d \mu_{H_{(1,0)}} d \mu_{K_0},
			\end{equation*}
			as $\det(\boldsymbol{C_A}) = |v|$.
			Then we define 
			
			\begin{equation*}
				\begin{split}
					H^*(s,t,\boldsymbol{B},\boldsymbol{y})&=\int_{H_{(1,0)}} H^*(s,t,u,v,\boldsymbol{B},\boldsymbol{y})\det(\boldsymbol{C_A}) d \mu_{H_{(1,0)}}(u,v) \\
					&=\int_{G_1} H^*(s,t,u,v,\boldsymbol{B},\boldsymbol{y})\det(\boldsymbol{C_A}) d \mu_{G_1}(u,v) \\
					&= \int_{\R} \int_{\R} H^*(s,t,u,v,\boldsymbol{B},\boldsymbol{y})\frac{du dv}{|v| }.
				\end{split} 
			\end{equation*}
			The next step is to compute integration of $H^*(s,t,\boldsymbol{B},\boldsymbol{y})$ over $K_0$, which is equal to 
			
			\begin{equation}
				\int_{\R} \int_{\R} H^*(s,t,\boldsymbol{B},\boldsymbol{y}) \frac{ds dt}{s^2+t^2}.
			\end{equation}
		\end{proof}

\end{document}